\documentclass[table]{gtech}

\usepackage{amssymb}
\usepackage{bigdelim}
\usepackage{longtable}
\usepackage{tabularray}
\usepackage{wrapfig}
\usepackage{float}
\usepackage{datatool}
\usepackage{enumitem}
\ifPDFTeX
  \usepackage{tgpagella}
  \usepackage{mathpazo}
\else
  \usepackage{unicode-math}
\fi
\usepackage{inconsolata}
\usepackage{makecell}
\usepackage{adjustbox}
\usepackage{tablefootnote}
\usepackage{array}
\usepackage{nicefrac}
\usepackage{xspace}
\usepackage{amsthm}
\usepackage{amsmath}
\usepackage{arydshln}
\usepackage{pifont}
\usepackage{tabulary}
\usepackage{fontawesome5}
\usepackage{bbding}
\usepackage{multicol}
\usepackage{mathrsfs}
\usepackage{algorithm}
\usepackage{algpseudocode}

\theoremstyle{plain}
\newtheorem{theorem}{Theorem}[section]

\newtheorem{corollary}[theorem]{Corollary}

\definecolor{rawone}{HTML}{F07171}
\definecolor{rawtwo}{HTML}{74B3E8}
\definecolor{mergeone}{HTML}{B71C1C}
\definecolor{mergetwo}{HTML}{0D47A1}
\definecolor{interp}{HTML}{43A047}

\definecolor{tcxcol}{HTML}{1565C0} 
\definecolor{hzhcol}{HTML}{C62828} 
\definecolor{hxlcol}{HTML}{6A1B9A} 

\tcbset{
  collaboratorbox/.style={
    enhanced,
    breakable,
    boxrule=0.6pt,
    arc=2pt,
    left=6pt,
    right=6pt,
    top=4pt,
    bottom=4pt,
    before skip=6pt,
    after skip=6pt,
    fonttitle=\bfseries\sffamily\small,
    coltitle=white,
  }
}
\newtcolorbox{tcxbox}[1][]{
  collaboratorbox,
  colback=tcxcol!8,
  colframe=tcxcol,
  title={TCX\ifstrempty{#1}{}{: #1}}
}
\newtcolorbox{hzhbox}[1][]{
  collaboratorbox,
  colback=hzhcol!8,
  colframe=hzhcol,
  title={HZH\ifstrempty{#1}{}{: #1}}
}
\newtcolorbox{hxlbox}[1][]{
  collaboratorbox,
  colback=hxlcol!8,
  colframe=hxlcol,
  title={HXL\ifstrempty{#1}{}{: #1}}
}

\title{\scalebox{0.97}{\shortstack[l]{
Trajectory Soup: Pushing the Compute-Scaling \\Frontier of LLM Mid-training via Diverse Trajectories}}}

\author[1,2 * \dag]{\vspace{0.5em}Zhehao Huang}
\author[1 *]{Changxin Tian}
\author[1]{Qingyuan Yang}
\author[1]{Kunlong Chen}
\author[1]{Ziqi Liu}
\author[1 \ddag]{\\Zhiqiang Zhang}
\author[2 \ddag]{Xiaolin Huang}
\author[1 ]{Jun Zhou}

\affiliation[1]{Ling Team, Ant Group}
\affiliation[2]{Shanghai Jiao Tong University}
\contribution[*]{Equal contribution}
\contribution[\dag]{Contribution during internship at Ant Group}
\contribution[\ddag]{Corresponding author}

\abstract{Mid-training equips pretrained large language models with specialized and reasoning capabilities, but the returns of this stage are bounded since additional serial compute yields little further downstream improvement and can even degrade some capabilities, which places a practical ceiling on how much compute mid-training absorbs. We revisit how this compute should be allocated to a single run or multiple similar optimizations. We find that branches forked from a shared checkpoint under various controlled recipe reaches measurably different regions of parameter space, and establish a form of compatible diversity that extending one run cannot supply. Therefore, we introduce \textbf{Trajectory Soup}, which distributes a mid-training budget over several independent branches, and consolidates strongest checkpoints selected on validation through intra- and inter-trajectory averaging into a single model. A local bias and variance analysis separates the two averaging levels, showing that inter-trajectory averaging removes residual error beyond the reach of averaging within a trajectory, while checkpoint selection carries a bias that bounds how many checkpoints are worth merging. Across model scales, learning-rate schedules, token budgets, and trajectory counts, Trajectory Soup improves aggregate downstream performance over the strongest single-trajectory average under matched budgets and keeps improving as budgets expand, with the advantage preserved after an identical post-training pipeline. These results position trajectory allocation and merging as a practical way to extend the compute-scaling frontier of mid-training beyond serial saturation.}
\date{\today}
\gtechdata[Correspondence]{%
  \begin{tabular}[t]{@{}l@{}}
    \email{\{tianchangxin.tcx,lingyao.zzq\}@antgroup.com}\\
    \email{\{kinght\_h,xiaolinhuang\}@sjtu.edu.cn}
  \end{tabular}%
}

\begin{document}
\maketitle

\begin{figure*}[tb]
    \centering
    \captionsetup{skip=5pt}
    \captionsetup[subfigure]{skip=1pt}
    \begin{subfigure}[b]{0.485\textwidth}
        \centering
        \includegraphics[width=\linewidth]{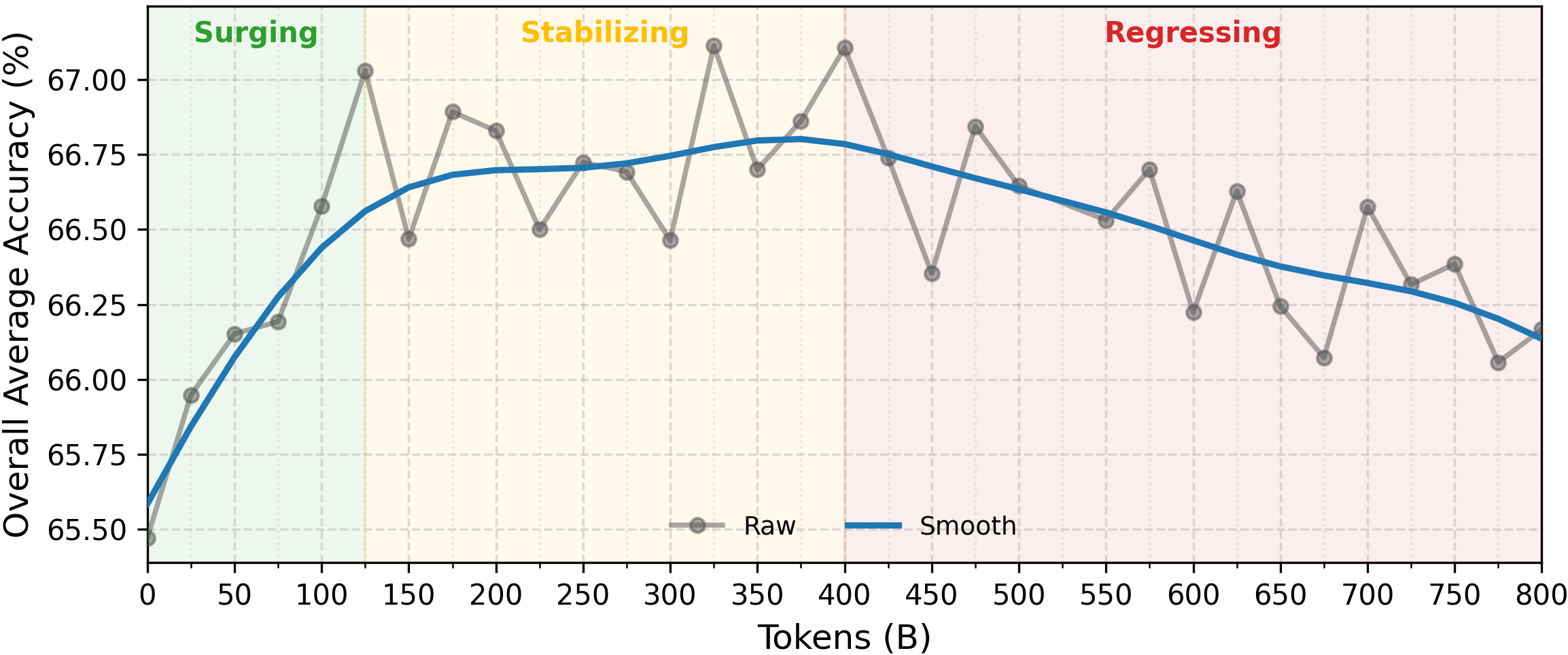}
        \caption{}
        \label{fig:midtraining-saturation}
    \end{subfigure}
    \hfill
    \begin{subfigure}[b]{0.485\textwidth}
        \centering
        \includegraphics[width=\linewidth]{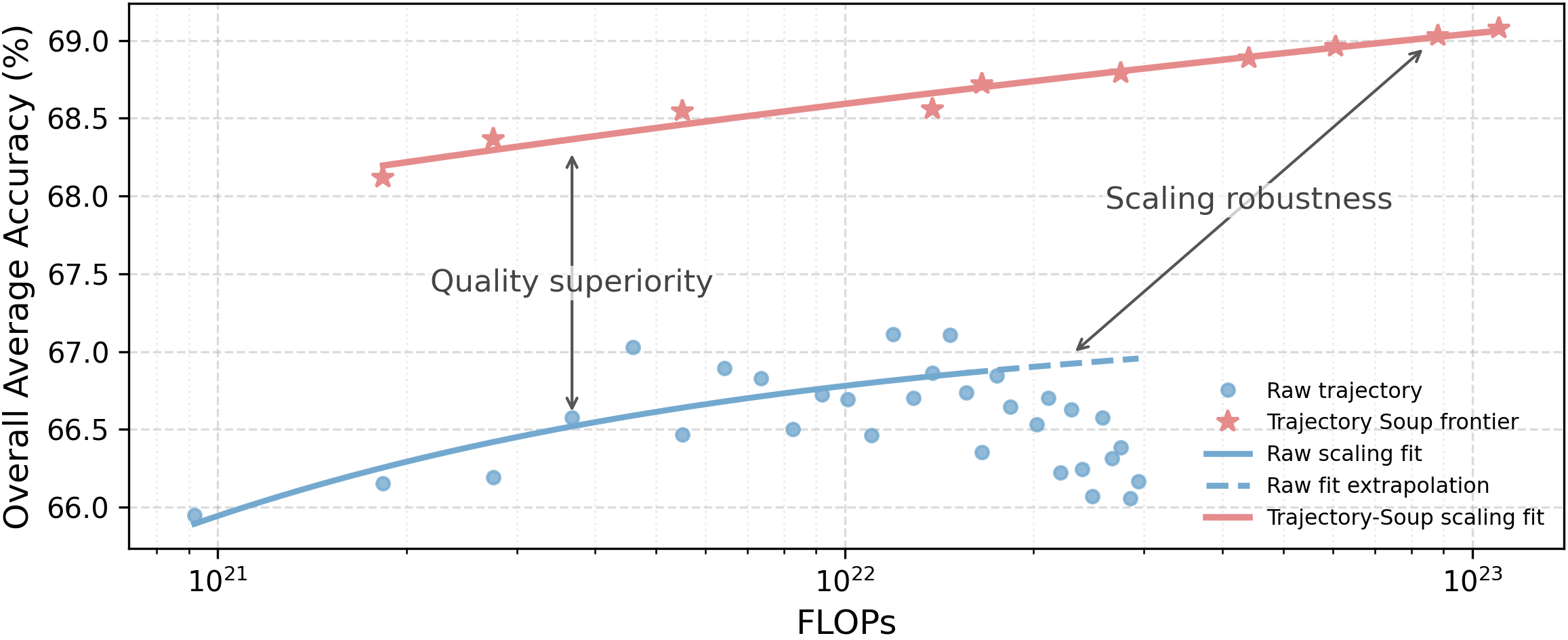}
        \caption{}
        \label{fig:trajectory-soup-compute-allocation}
    \end{subfigure}
    \caption{Motivation and compute-allocation overview of Trajectory Soup. (a) Performance saturation along serial mid-training trajectories. (b) Compute scaling of raw training and Trajectory Soup. Overall Average Accuracy is plotted against cumulative training FLOPs on a logarithmic scale.}
    \label{fig:intro-motivation}
\end{figure*}

\section{Introduction}
\label{sec:introduction}

Mid-training plays an important role in developing specialized and reasoning capabilities in pretrained large language models (LLMs)~\citep{liu2025midtraining,shao2024deepseekmath}. However, unlike in pretraining, increasing mid-training compute does not always lead to better downstream performance. As illustrated in \textbf{Fig}.~\ref{fig:midtraining-saturation}, downstream performance improves rapidly early in training and then saturates, further training can even degrade capabilities. These observations expose a practical \emph{scaling wall} in  mid-training: beyond a certain point, extending the single training run brings little downstream improvement. This motivates us to explore alternative ways of allocating mid-training compute to sustain downstream gains. 

To this end, we consider distributing compute across multiple shorter trajectories. Starting from a common pretrained checkpoint, independent branches can follow different optimization paths through controlled variation of the training recipe~\citep{frankle2020linear}, potentially producing complementary updates even on the same data distribution. These potentially complementary updates can then be consolidated into a single model through model merging. However, existing methods typically focus on either intra-trajectory merging, which averages checkpoints within a single run~\citep{izmailov2018averaging}, or inter-trajectory merging, which combines runs through endpoint averaging~\citep{wortsman2022model} or periodic aggregation~\citep{douillard2023diloco}. Whether combining these two forms of merging can make compute allocation across trajectories more effective remains underexplored. Inspired by the compute-allocation perspective of scaling laws~\citep{kaplan2020scaling,hoffmann2022training}, we study how to divide a mid-training budget between trajectory count and trajectory length. This leads us to ask: \textit{whether exploiting both at once turns a fixed compute budget spread over several trajectories into a single model that outperforms a longer serial run?}

Our empirical observations and theoretical analysis support this possibility. First, after within-trajectory averaging, the examined branches retain distinct dominant directions. Interpolating between their averaged anchors further reduces validation loss, indicating that useful differences remain after temporal averaging. Second, a local bias--variance analysis separates within-trajectory fluctuations from persistent branch variation and characterizes how checkpoint selection trades variance reduction against mean displacement. Building on these findings, we propose \textbf{Trajectory Soup}, which combines both stages. It forks several independent branches from a shared pretrained checkpoint and trains each for the same number of tokens. Within each branch, it ranks checkpoints by validation set and averages a fixed number of the lowest-loss checkpoints into a trajectory anchor. It then averages these anchors into a single model, retaining the original architecture, tokenizer, and inference procedure. We evaluate two budget settings: budget reallocation, where branches share a fixed total token budget, and budget expansion, where we add branches while keeping the per-branch token budget fixed.

Trajectory Soup turns compute into downstream gains where serial training no longer does. Under a matched token budget, Trajectory Soup surpasses the strongest intra-trajectory average as well as inter-trajectory baselines that keep only trajectory endpoints or that average the entire pooled set of checkpoints. Expanding the budget with additional trajectories widens the margin instead of exhausting it. \textbf{Fig}.~\ref{fig:trajectory-soup-compute-allocation} shows that Trajectory Soup frontier lies above the raw scaling curve at every matched compute level, and it keeps rising at a steady rate while the raw fit bends over and its checkpoints scatter downward as the budget grows. The frontier also improves as trajectories are added, and each trajectory needs to contribute only its strongest few checkpoints, so broad coverage of the loss basin rather than deep sampling within it carries the gain. The advantage persists after supervised fine-tuning, so scaling trajectories pays off end to end rather than only at the mid-training checkpoint. In summary, our contributions are threefold.

\begin{itemize}[leftmargin=*]
    \item We recast compute allocation for LLM mid-training as a question of how many optimization trajectories to run rather than how long to run a single one, and identify trajectory count as an allocation axis that remains productive after the length of a single trajectory has saturated. Trajectories separated only by controlled variation of the recipe stay geometrically distinct, and interpolating between them improves on both endpoints.

    \item We introduce Trajectory Soup, which selects the strongest checkpoints inside each trajectory and averages the resulting anchors across trajectories, yielding a single model with unchanged inference cost. A local bias and variance analysis grounds this design by separating what the two averaging levels contribute, quantifying the selection-dependent bias, showing that uniform weights minimize the variance term, and predicting a finite preferred checkpoint count that decreases as trajectories are added.

    \item We validate the method across model scales, learning-rate schedules, training budgets, and trajectory counts, show that the advantage is retained after a shared post-training pipeline, and isolate trajectory diversity from checkpoint sampling density as the source of the gains.
\end{itemize}

\section{Empirical Motivation: Saturation and Complementary Trajectories}
\label{sec:motivation}

\textbf{Serial mid-training exhibits diminishing downstream returns.}
Our scaling baseline fixes the model architecture and a high-quality mid-training corpus and adopts the best configuration found in hyperparameter search. As \textbf{Fig}.~\ref{fig:midtraining-saturation} shows, downstream performance first improves with the token budget, then plateaus, and eventually declines. The serial-training recipe therefore reaches a practical saturation regime beyond which further tokens stop paying off. Two mechanisms plausibly contribute to this decline. With model capacity fixed, prolonged exposure to a limited corpus offers diminishing returns, consistent with findings on the decreasing value of repeated training tokens~\citep{muennighoff2023scaling}. Separately, stochastic fluctuations along the optimization path can leave a checkpoint displaced from nearby parameter regions of lower validation loss~\citep{mandt2017stochastic}. The second mechanism can be mitigated by weight averaging~\citep{zhou2026extra}, as the raw and averaged checkpoints of \textbf{Fig}.~\ref{fig:interpolation} indicate. However, once local fluctuations are largely attenuated, additional checkpoints contribute less noise reduction. Averaging within a single trajectory therefore runs into diminishing returns. 

The limitation of extending the same run motivates us to allocate additional compute to several branches instead. The remainder of this section tests two prerequisites for that allocation: (1) whether controlled recipe perturbations produce distinct optimization directions, and (2) whether combining the resulting branch anchors reaches lower validation loss.

\begin{figure}[t]
    \centering
    \begin{minipage}[t]{0.36\textwidth}
        \centering
        \includegraphics[width=\linewidth]{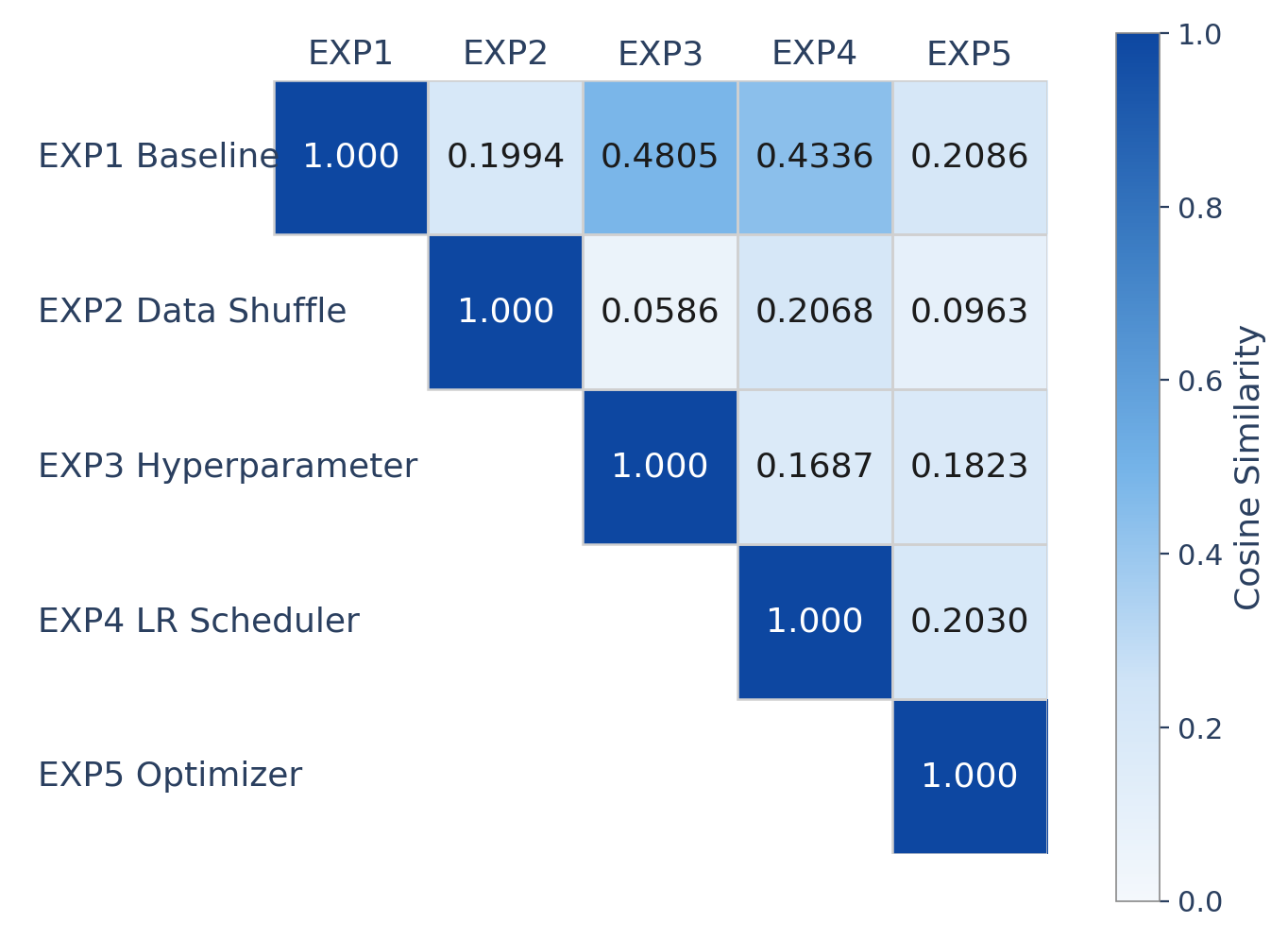}
        \caption{\textbf{Compatible branch directions overlap.} Pairwise cosine similarity between leading directions for five compatible branches. All pairs only have weak overlap and partial optimization similarity.}
        \label{fig:direction-heatmap}
    \end{minipage}\hfill
    \begin{minipage}[t]{0.62\textwidth}
        \centering
        \includegraphics[width=\linewidth]{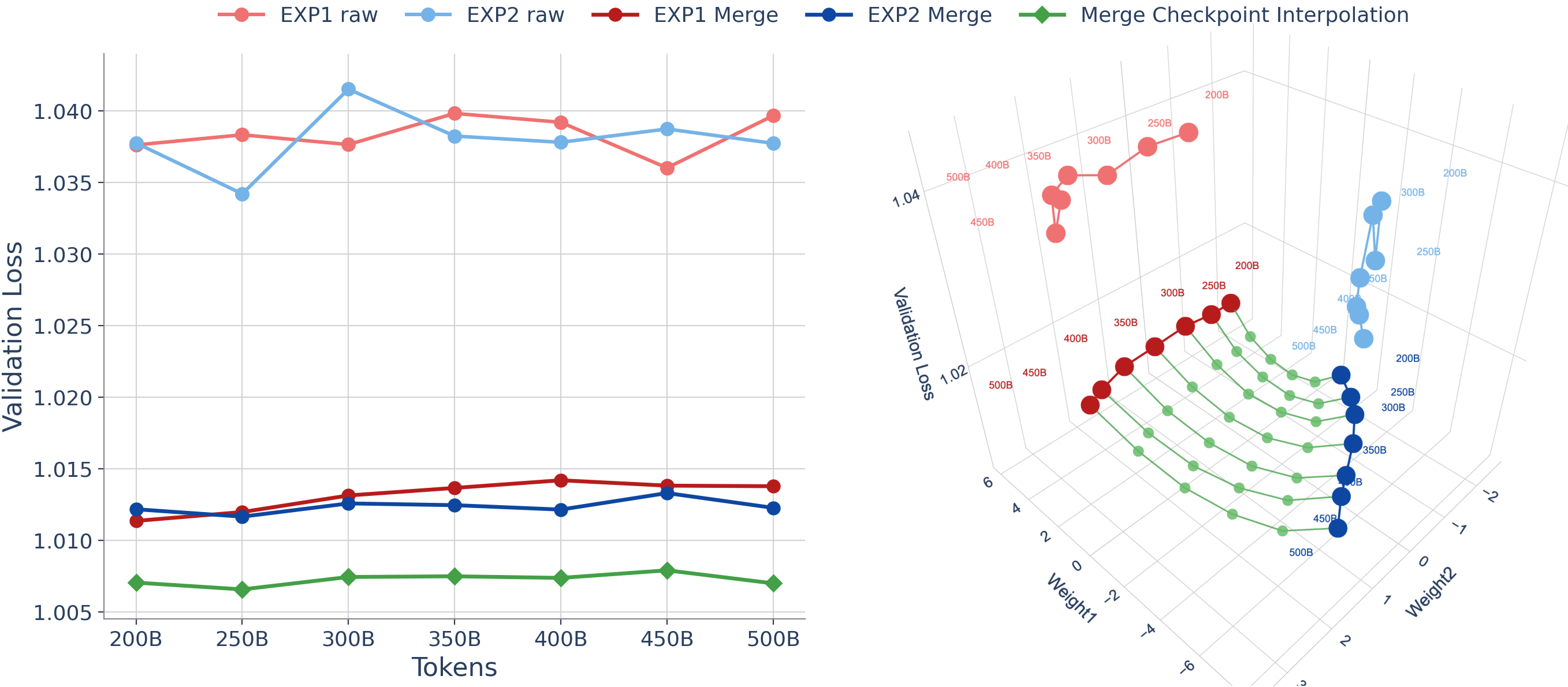}
        \caption{\textbf{Validation loss within and across trajectories.} Left: validation loss against consumed tokens for the raw checkpoints of EXP1 and EXP2, for their within-trajectory merges, and for the best interpolation between the two merged anchors. Right: the same models in a two-dimensional projection of the weight space, with one interpolation path per token budget.}
        \label{fig:interpolation}
    \end{minipage}
    \vspace{-10pt}
\end{figure}

\textbf{Intra-trajectory averaging reveals partially overlapping branch directions.}
We first examine the directional diversity across branches that are comparably optimized. We parallelly run five trajectories under various recipes (denoted as $\mathrm{EXP}1\sim\mathrm{EXP}5$). Each branch forks from the baseline run and perturbed by a controlled recipe change, such as changing the data order, learning rate, batch size, learning-rate schedule, or optimizer. All branches satisfy the training-loss screening criterion in \textbf{Sec}.~\ref{sec:problem-formulation}, controlling differences in measured optimization quality. Now we try to discover whether these branches explore distinct optimization directions. \citet{zhou2026extra} report that averaging can reveal an approximately rank-one structure within a late-stage training trajectory. Following this observation, we smooth each branch through intra-trajectory averaging, extract its leading principal component, and compute pairwise cosine similarities with other trajectories. \textbf{Fig}.~\ref{fig:direction-heatmap} shows positive but limited alignment among these branches, indicating that controlled recipe perturbations produce distinct dominant directions with partial overlap. Directional diversity alone, however, does not establish a merging benefit, which depends on the validation-loss geometry between the resulting anchors rather than on the angle between them. Detailed protocol of trajectory geometry construction is provided in Appendix~\ref{app:geometry}.

\textbf{Inter-trajectory averaging reaches lower validation loss.}
We assess the benefit of combining intra-trajectory averaged anchors through linear interpolation~\citep{frankle2020linear}. At each matched token horizon, we take the averaged checkpoints of two screened branches and sweep the interpolation coefficient between them, following the protocol in Appendix~\ref{app:geometry}. \textbf{Fig}.~\ref{fig:interpolation} shows inter-trajectory interpolation achieves lower validation loss than intra-trajectory averaging alone. The right panel illustrates the corresponding geometry: branches departing from a shared pretrained checkpoint follow distinct optimization directions, while interpolation between their averaged anchors reaches a lower-loss region. This coexistence of directional diversity and compatibility under parameter averaging is consistent with prior findings on linear mode connectivity and model soups~\citep{frankle2020linear,wortsman2022model}. Together, the direction and interpolation analyses identify distinct optimization paths whose progress can be combined beneficially. These findings motivate allocating additional compute to multiple compatible branches as serial training approaches saturation. \textbf{Sec}.~\ref{sec:method} formalizes this construction as Trajectory Soup.

\section{Trajectory Soup}
\label{sec:method}

\subsection{Problem Formulation and Budget Accounting}
\label{sec:problem-formulation}

Let $\theta_0$ be a shared pre-trained checkpoint and let $\mathcal P$ be the target training distribution, which we keep common to all branches. Branch $n\in\{1,\ldots,N\}$ follows recipe $\psi_n$ and produces parameters $\theta_n(i)$ after consuming $i$ mid-training tokens. All branches retain the same architecture, so their parameters can be combined coordinate by coordinate. For a common per-branch budget $t$, the total token budget is $T=Nt$. Each recipe $\psi_n$ is obtained by modifying the best configuration found in the baseline hyperparameter search along one or more of its data order, learning rate, batch size, learning-rate schedule, or optimizer. Because such perturbations can also destabilize a run, we admit a branch only when a compatibility observable $\mathcal J$ stays within a tolerance $\varepsilon$, that is, when $\mathcal J(\theta_n(t))\leq\varepsilon$. In our implementation $\mathcal J$ is the relative training-loss gap to the baseline at the same token horizon and $\varepsilon=0.01$. This screening keeps the branches comparable in optimization quality, while their parameter-space directions remain diverse. Changes to the data mixture are a possible extension, and we leave their effects to future work.

\subsection{Trajectory Soup}
\label{sec:trajectory-soup}

The observations in \textbf{Sec}.~\ref{sec:motivation} suggest merging at two levels: within a branch where averaging attenuates local fluctuations along one path, and across branches where averaging consolidates directions that individual paths do not share. Trajectory Soup performs these two stages in order.

\begin{figure}[tbp]
    \begin{minipage}[t]{0.49\textwidth}
    \vspace{0pt}
    \input{_figures/trajectory_soup_algorithm.tex}
    \end{minipage}\hfill
    \begin{minipage}[t]{0.48\textwidth}
    \vspace{-1pt}
    \input{_figures/trajectory_soup_combined.tex}
    \end{minipage}
\end{figure}

\textbf{Step 1: Intra-Trajectory Merging: selecting and averaging checkpoints within each branch.}
\label{sec:checkpoint-selection}
Let $\mathcal C_n(t)$ be the candidate token positions of checkpoints saved from branch $n$ by $t$. Checkpoints beyond this horizon are excluded. We rank the candidates of each branch by validation loss $\widehat{\mathcal L}_{\mathrm{val}}(\theta)$, keep the best $K$ positions, and average them into a branch anchor, so that within-trajectory variation is attenuated before any branch is combined with another,
\begin{equation}
\mathcal I_n(t,K)
\in\underset{\mathcal I\subseteq\mathcal C_n(t),|\mathcal I|=K}{\arg\min}
\sum_{i\in\mathcal I}\widehat{\mathcal L}_{\mathrm{val}}(\theta_n(i)),
\qquad
\bar\theta_n(t,K)
=\frac1K\sum_{i\in\mathcal I_n(t,K)}\theta_n(i).
\label{eq:topk-selection}
\end{equation}

\textbf{Step 2: Inter-Trajectory Merging: combining branch anchors.}
\label{sec:combining-anchors}
Trajectory Soup averages the selected branch anchors,
\begin{equation}
\boxed{
\theta_{\mathrm{Traj\text{-}Soup}}(N,t,K)
=\frac1N\sum_{n=1}^N\bar\theta_n(t,K)
=\frac1{NK}\sum_{n=1}^N\sum_{i\in\mathcal I_n(t,K)}\theta_n(i).
}
\label{eq:trajectory-soup}
\end{equation}
\textbf{Fig}.~\ref{fig:method-diagram} illustrates the procedure and places it beside the two single-stage merges it generalizes. Algorithm~\ref{alg:trajectory_soup} states the two averaging stages for the admitted branches. The two-stage construction is algebraically equivalent to uniformly averaging the selected $M=NK$ checkpoints and the step order can be reversed. The special case $N=1$ recovers selected intra-trajectory averaging, while $K=1$ averages the best checkpoint from each branch. Trajectory Soup introduces branch count as an additional axis of compute allocation. As serial training approaches saturation, we seek to distribute the budget across shorter compatible trajectories, whose complementary progress is consolidated into a single model. \textbf{Fig}.~\ref{fig:trajectory-soup-compute-allocation} further shows that allocating additional compute to complementary branches sustains downstream gains beyond the observed serial-saturation regime.

\section{Theoretical Insights}
\label{sec:theory}

We theoretically analyze how checkpoint selection and the two averaging stages in Trajectory Soup affect validation loss. We use a local quadratic model~\citep{mandt2017stochastic} and standard bias--variance identities~\citep{bishop2006pattern} to separate selection-dependent bias, persistent branch variation, and within-trajectory fluctuations.

\subsection{A Local Validation-Loss Model}
\label{sec:local-model}

Let $\mathcal L_Q(\theta)=\mathbb E_{z\sim Q}[\ell(\theta,z)]$ denote population validation loss, estimated by $\widehat{\mathcal L}_{\mathrm{val}}$ for checkpoint ranking in \textbf{Eq}.~\eqref{eq:topk-selection}. We assume that all analyzed checkpoints and averages lie in a region $\mathscr B$ around a stationary local minimizer $\theta^\star$, where $\mathcal L_Q(\theta)=\mathcal L^\star+\frac12\|\theta-\theta^\star\|_H^2$ with $\mathcal L^\star=\mathcal L_Q(\theta^\star)$, $H=\nabla^2\mathcal L_Q(\theta^\star)\succeq0$, and $\|x\|_H^2=x^\top Hx$. The compatibility screen of \textbf{Sec}.~\ref{sec:problem-formulation} and the interpolation profiles of \textbf{Sec}.~\ref{sec:motivation} motivate this local model, which underpins the stochastic analysis of constant-step SGD~\citep{mandt2017stochastic} and the flatness interpretation of weight averaging~\citep{wortsman2022model}.
\begin{theorem}[Local bias and variance decomposition]
\label{thm:local-loss}
For any random model $\vartheta$ supported in $\mathscr B$ with finite second moments,
\begin{equation}
\mathbb E[\mathcal L_Q(\vartheta)]-\mathcal L^\star=\frac12\|\mathbb E[\vartheta]-\theta^\star\|_H^2+\frac12\operatorname{tr}\!\left(H\operatorname{Cov}(\vartheta)\right).
\label{eq:expected-loss}
\end{equation}
\end{theorem}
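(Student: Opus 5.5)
The plan is to apply the standard bias--variance identity for a quadratic form directly, using the local model of \textbf{Sec}.~\ref{sec:local-model}. Since $\vartheta$ is supported in $\mathscr B$, we may substitute the quadratic expression pointwise:
\[
\mathcal L_Q(\vartheta)-\mathcal L^\star=\tfrac12\|\vartheta-\theta^\star\|_H^2
\]
holds almost surely. The finite second moments guarantee that the expectation of the right-hand side exists, because $\|x\|_H^2\le\|H\|\,\|x\|^2$. Hence taking expectations is legitimate, and it remains to compute $\mathbb E\|\vartheta-\theta^\star\|_H^2$.

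Next, write $\mu=\mathbb E[\vartheta]$ and decompose
\[
\vartheta-\theta^\star=(\vartheta-\mu)+(\mu-\theta^\star).
\]
Expanding the quadratic form gives three terms:
\[
\|\vartheta-\mu\|_H^2+2(\mu-\theta^\star)^\top H(\vartheta-\mu)+\|\mu-\theta^\star\|_H^2 .
\]
The cross term has zero expectation by linearity, since $\mathbb E[\vartheta-\mu]=0$ and $\mu-\theta^\star$ is deterministic. The last term is constant and yields the bias $\frac12\|\mathbb E[\vartheta]-\theta^\star\|_H^2$.

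For the first term, use the trace trick:
\[
(\vartheta-\mu)^\top H(\vartheta-\mu)=\operatorname{tr}\!\big(H(\vartheta-\mu)(\vartheta-\mu)^\top\big).
\]
Because trace and expectation are linear, this gives
\[
\mathbb E\|\vartheta-\mu\|_H^2=\operatorname{tr}(H\operatorname{Cov}(\vartheta)).
\]
Multiplying through by $\frac12$ gives \textbf{Eq}.~\eqref{eq:expected-loss}.

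No step is a real obstacle. The only point needing care is integrability, which the finite-second-moment hypothesis settles. One might also ask whether $\mathbb E[\vartheta]$ lies in $\mathscr B$. The identity does not require this, because the quadratic formula is applied only at realizations of $\vartheta$, and the bias term is a purely algebraic quantity. We also note that $H\succeq0$ is not needed for the identity itself; it only ensures that both terms are nonnegative. That interpretation is what the later sections use when reading the first term as selection-dependent bias and the second as variance reduced by averaging.
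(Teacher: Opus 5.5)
Your proof is correct and follows the paper's argument essentially verbatim: you center at $\mu=\mathbb E[\vartheta]$, drop the cross term because $\mathbb E[\vartheta-\mu]=0$, and convert the remaining quadratic into $\operatorname{tr}(H\operatorname{Cov}(\vartheta))$ via the trace identity. Your remarks on integrability, and on the identity needing neither $\mathbb E[\vartheta]\in\mathscr B$ nor $H\succeq0$, are accurate points that the paper leaves implicit.
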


The two terms measure displacement of the mean model and variation around it, respectively. Checkpoint selection can change both. The identity requires only finite moments and local compatibility, and makes no Gaussian or optimizer-specific assumption. The proof is given in Appendix~\ref{app:expected-loss}.

\subsection{Branchwise Decomposition of Validation Loss}
\label{sec:trajectory-effects}

We first identify the part of a branch's loss that intra-trajectory merging can reach. Let $\theta_{n,(j)}$ be the checkpoint of validation rank $j$ on branch $n$, and let $\bar\theta_n$ represent the checkpoint that averages its top-$K$ selection in \textbf{Eq}.~\eqref{eq:topk-selection}. Let $\mathscr F_n$ represent the persistent component of branch randomness, the part that a single run fixes and then carries along its entire path, so that conditioning on $\mathscr F_n$ holds that component fixed while retaining the variation among selected checkpoints. Theorem~\ref{thm:branch-loss} splits the loss of the anchor into the displacement of its mean, the within-trajectory variation that survives conditioning, and the persistent variation it inherits from $\mathscr F_n$.

\begin{theorem}[Branchwise decomposition of validation loss]
\label{thm:branch-loss}
Under the conditions of Theorem~\ref{thm:local-loss},
\begin{equation}
\mathbb E[\mathcal L_Q(\bar\theta_n)]-\mathcal L^\star
=\underbrace{\frac12\|\mathbb E[\bar\theta_n]-\theta^\star\|_H^2}_{B_n(K)}
+\underbrace{\frac12\mathbb E\!\left[\operatorname{tr}\!\left(H\operatorname{Cov}(\bar\theta_n\mid\mathscr F_n)\right)\right]}_{V_{\mathrm{intra},n}(K)}
+\underbrace{\frac12\operatorname{tr}\!\left(H\operatorname{Cov}(\mathbb E[\bar\theta_n\mid\mathscr F_n])\right)}_{V_{\mathrm{inter},n}(K)}.
\label{eq:branch-loss}
\end{equation}
\end{theorem}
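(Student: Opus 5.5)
The plan is to apply Theorem~\ref{thm:local-loss} to the random model $\vartheta=\bar\theta_n$ and then refine its covariance term with the law of total covariance.

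First, check that $\bar\theta_n$ meets the hypotheses of Theorem~\ref{thm:local-loss}. By assumption every analyzed average lies in $\mathscr B$, so $\bar\theta_n$ is supported there. It has finite second moments because it is an average of $K$ checkpoints, each with finite second moments. Theorem~\ref{thm:local-loss} then gives
\begin{equation*}
\mathbb E[\mathcal L_Q(\bar\theta_n)]-\mathcal L^\star=\tfrac12\|\mathbb E[\bar\theta_n]-\theta^\star\|_H^2+\tfrac12\operatorname{tr}\!\left(H\operatorname{Cov}(\bar\theta_n)\right),
\end{equation*}
and the first term is exactly $B_n(K)$.

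Second, decompose the covariance by conditioning on $\mathscr F_n$. The law of total covariance gives
\begin{equation*}
\operatorname{Cov}(\bar\theta_n)=\mathbb E\!\left[\operatorname{Cov}(\bar\theta_n\mid\mathscr F_n)\right]+\operatorname{Cov}\!\left(\mathbb E[\bar\theta_n\mid\mathscr F_n]\right).
\end{equation*}
To verify it, write $\bar\theta_n-\mathbb E\bar\theta_n=(\bar\theta_n-\mathbb E[\bar\theta_n\mid\mathscr F_n])+(\mathbb E[\bar\theta_n\mid\mathscr F_n]-\mathbb E\bar\theta_n)$ and expand the outer product. The two cross terms have zero expectation: the second factor is $\mathscr F_n$-measurable, and the first factor has zero conditional mean given $\mathscr F_n$. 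Finite second moments make every conditional expectation here well defined and integrable.

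Third, multiply by $H$, take the trace, and halve. Trace and left-multiplication by the fixed matrix $H$ are linear, so they commute with the outer expectation: $\operatorname{tr}(H\,\mathbb E[\cdot])=\mathbb E[\operatorname{tr}(H\,\cdot)]$. The two pieces are then $V_{\mathrm{intra},n}(K)$ and $V_{\mathrm{inter},n}(K)$, which yields Eq.~\eqref{eq:branch-loss}.

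The only delicate point is a measure-theoretic one. Selection makes $\bar\theta_n$ a data-dependent average, since the index set $\mathcal I_n(t,K)$ is random. This needs no special handling: we treat $\bar\theta_n$ as a single random vector, and both the total covariance identity and Theorem~\ref{thm:local-loss} apply to any square-integrable random vector, whatever the selection mechanism. We should also note that $\mathscr F_n$ is taken as a sub-$\sigma$-algebra, so that the conditional covariances exist as regular versions.
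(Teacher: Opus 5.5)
Your proposal is correct and follows the paper's proof essentially step for step. The paper likewise applies Theorem~\ref{thm:local-loss} to $\bar\theta_n$. It then splits $\bar\theta_n-\mathbb E[\bar\theta_n]$ into the conditional residual and the $\mathscr F_n$-measurable term $\mathbb E[\bar\theta_n\mid\mathscr F_n]-\mathbb E[\bar\theta_n]$, whose mixed second moment vanishes, and this yields the law of total covariance.

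Your remarks on random selection are fine: the candidate pool $\mathcal C_n(t)$ is finite, so $\bar\theta_n$ inherits square-integrability. You do not need regular conditional distributions, though, since the conditional covariance is just a conditional expectation of an integrable random matrix.
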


All three contributions, $B_n(K)$, $V_{\mathrm{intra},n}(K)$, and $V_{\mathrm{inter},n}(K)$ are nonnegative. The bias $B_n(K)$ collects an offset shared by every branch, which neither averaging stage removes, together with offsets specific to the recipe $\psi_n$ and to the selection itself. Appendix~\ref{app:covariance} records that decomposition and the random-effects construction behind it. The two variance terms differ in what can reach them: $V_{\mathrm{intra},n}(K)$ is precisely what merging within a branch acts on, whereas $V_{\mathrm{inter},n}(K)$ is invisible to any operation confined to a single trajectory. Sampling the path more densely within one trajectory may reveal more about that single branch, but it gives no access to the cross-branch variation. This tradeoff explains how the intra-trajectory gains of \textbf{Sec}.~\ref{sec:motivation} can diminish, and how fast that term falls is what \textbf{Sec}.~\ref{sec:two-levels} quantifies. Appendix~\ref{app:proof-branch-loss} proves the theorem.

\subsection{Hierarchical Variance Reduction}
\label{sec:two-levels}
\label{sec:equal-weights}

To quantify the residual reduction, note that the term $V_{\mathrm{intra},n}(K)$ of \textbf{Eq}.~\eqref{eq:branch-loss} is the curvature-weighted covariance of the average, over the $K$ selected ranks, of the fluctuations of the selected checkpoints around their conditional means given $\mathscr F_n$. Temporal correlation and ranking-induced dependence therefore both limit how fast it decays. For branches with a common rank-1 residual contribution $V_{\mathrm{intra}}=V_{\mathrm{intra},n}(1)>0$, we write $V_{\mathrm{intra},n}(K)=V_{\mathrm{intra}}/K_{\mathrm{eff}}(K)$, whose effective count is a variance ratio: it reaches $K$ only when those fluctuations are uncorrelated with equal marginal contributions, and stays below $K$ whenever their pairwise contributions are nonnegative. Appendix~\ref{app:covariance} gives the residual covariance behind this ratio together with its bounds and limiting cases. Thus $K_{\mathrm{eff}}(K)$ settles the first averaging level, which operates inside a single branch. The second level averages the $N$ branch anchors themselves. The following theorem therefore introduces the corresponding bias term and collects both levels in one expression.

\begin{theorem}[Hierarchical variance reduction]
\label{thm:hierarchical}
Suppose the shared-shift condition holds and the selected anchors are independent under the fixed experimental setup. Define $B_{\mathrm{soup}}(K)=\frac12\left\|\frac1N\sum_{n=1}^N\mathbb E[\bar\theta_n]-\theta^\star\right\|_H^2$, with common contributions $V_{\mathrm{intra},n}(K)=V_{\mathrm{intra}}/K_{\mathrm{eff}}(K)$ and $V_{\mathrm{inter},n}(K)=V_{\mathrm{inter}}$.
The equally averaged model $\theta_{\mathrm{Traj\text{-}Soup}}=N^{-1}\sum_{n=1}^N\bar\theta_n$ of \textbf{Eq}.~\eqref{eq:trajectory-soup} satisfies
\begin{equation}
\boxed{\mathbb E[\mathcal L_Q(\theta_{\mathrm{Traj\text{-}Soup}})]-\mathcal L^\star=B_{\mathrm{soup}}(K)+\frac{V_{\mathrm{intra}}}{N K_{\mathrm{eff}}(K)}+\frac{V_{\mathrm{inter}}}{N}}
\label{eq:hierarchical}
\end{equation}
\end{theorem}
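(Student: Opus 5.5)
We apply Theorem~\ref{thm:local-loss} to the random model $\vartheta=\theta_{\mathrm{Traj\text{-}Soup}}=N^{-1}\sum_{n=1}^N\bar\theta_n$. This model lies in $\mathscr B$ by the standing assumption that all analyzed averages do, and it has finite second moments because each anchor does. The theorem gives
\[
\mathbb E[\mathcal L_Q(\theta_{\mathrm{Traj\text{-}Soup}})]-\mathcal L^\star=\frac12\Bigl\|\mathbb E[\theta_{\mathrm{Traj\text{-}Soup}}]-\theta^\star\Bigr\|_H^2+\frac12\operatorname{tr}\bigl(H\operatorname{Cov}(\theta_{\mathrm{Traj\text{-}Soup}})\bigr).
\]
By linearity of expectation, $\mathbb E[\theta_{\mathrm{Traj\text{-}Soup}}]=N^{-1}\sum_n\mathbb E[\bar\theta_n]$. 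The first term is therefore exactly $B_{\mathrm{soup}}(K)$ as defined in the statement, and no further work is needed for the bias.

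For the variance term, we use the assumed independence of the selected anchors. Independence kills all cross covariances, so
\[
\operatorname{Cov}(\theta_{\mathrm{Traj\text{-}Soup}})=N^{-2}\sum_n\operatorname{Cov}(\bar\theta_n).
\]
Next we decompose each $\operatorname{Cov}(\bar\theta_n)$ with the law of total covariance, conditioning on $\mathscr F_n$:
\[
\operatorname{Cov}(\bar\theta_n)=\mathbb E\bigl[\operatorname{Cov}(\bar\theta_n\mid\mathscr F_n)\bigr]+\operatorname{Cov}\bigl(\mathbb E[\bar\theta_n\mid\mathscr F_n]\bigr).
\]
Multiplying by $\tfrac12 H$ and taking traces turns these two pieces into $V_{\mathrm{intra},n}(K)+V_{\mathrm{inter},n}(K)$; here we use that trace and expectation commute, which is legitimate with finite second moments. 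This is the same step as in the proof of Theorem~\ref{thm:branch-loss}, so we can simply cite that decomposition. Substituting the common contributions $V_{\mathrm{intra}}/K_{\mathrm{eff}}(K)$ and $V_{\mathrm{inter}}$ and summing $N$ identical terms against the prefactor $N^{-2}$ gives $\frac{V_{\mathrm{intra}}}{NK_{\mathrm{eff}}(K)}+\frac{V_{\mathrm{inter}}}{N}$, which is \textbf{Eq}.~\eqref{eq:hierarchical}.

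The main obstacle is justifying the independence and the role of the shared-shift condition. Any component common to all branches, such as the offset shared by every branch described after Theorem~\ref{thm:branch-loss}, would otherwise induce cross-branch covariance that does not shrink like $1/N$. We will read the shared-shift condition, as set up in Appendix~\ref{app:covariance}, as saying that this common component is deterministic under the fixed experimental setup. It is then absorbed into the means $\mathbb E[\bar\theta_n]$, and hence into $B_{\mathrm{soup}}(K)$, rather than into the covariance. With that reading, the pairwise independence of the centered anchors is exactly what the cross-term cancellation needs. We would state this explicitly so that the vanishing of $\operatorname{Cov}(\bar\theta_n,\bar\theta_m)$ for $n\neq m$ is clearly a consequence of the hypotheses and not an extra assumption.
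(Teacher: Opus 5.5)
Your proof is correct and follows the paper's argument: you apply Theorem~\ref{thm:local-loss} to the soup, identify the bias term by linearity, use anchor independence to reduce the covariance to $N^{-2}\sum_n\operatorname{Cov}(\bar\theta_n)$, and split each anchor covariance by the law of total covariance into $V_{\mathrm{intra},n}(K)+V_{\mathrm{inter},n}(K)$. (The paper writes the second piece as $\operatorname{Cov}(u_n)$ via $m_n(K)=\mu_n(K)+u_n$.) One small correction is that in the paper the shared-shift condition means a single conditional shift $u_n$ common to all selected ranks within a branch, which makes $V_{\mathrm{inter},n}$ independent of $K$; it does not mean that a cross-branch offset is deterministic, and your argument does not need it anyway, because the cross-branch covariances vanish directly by the assumed independence of the selected anchors.
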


The bias $B_{\mathrm{soup}}(K)$ depends on the chosen branch collection. Temporal averaging reduces the intra-trajectory contribution through $K_{\mathrm{eff}}(K)$, while branch averaging reduces both stochastic contributions through $N$. \textbf{Tab}.~\ref{tab:averaging-methods} compares these effects. Appendix~\ref{app:proof-hierarchical} gives the proof.

\begin{table*}[t]
\centering
\small
\renewcommand{\arraystretch}{1.22}
\begin{tabular}{lccc}
\toprule
Method & Intra & Inter & Expected excess loss in the local quadratic model\\
\midrule
Rank-1 checkpoint & No & No
& $B_n(1)+V_{\mathrm{intra}}+V_{\mathrm{inter}}$\\
Intra-trajectory merging & Yes & No
& $B_n(K)+V_{\mathrm{intra}}/K_{\mathrm{eff}}(K)+V_{\mathrm{inter}}$\\
Inter-trajectory merging & No & Yes
& $B_{\mathrm{soup}}(1)+V_{\mathrm{intra}}/N+V_{\mathrm{inter}}/N$\\
Trajectory Soup & Yes & Yes
& $B_{\mathrm{soup}}(K)+V_{\mathrm{intra}}/(N K_{\mathrm{eff}}(K))+
V_{\mathrm{inter}}/N$\\
\bottomrule
\end{tabular}
\caption{Comparsive analysis of averaging operations under the local quadratic model and the homogeneous independent-anchor assumptions of Theorem~\ref{thm:hierarchical}.}
\label{tab:averaging-methods}
\end{table*}

\textbf{Variance optimality of uniform averaging.} Finally, the same decomposition reveals that the two \textit{uniform} averages of Trajectory Soup are variance-optimal rather than merely convenient. Consider replacing them with deterministic nonnegative weights over ranks and branches, each normalized to sum to one. Under the condition that every selected checkpoint of a branch shares the same aggregate curvature-weighted covariance with its selected set, and that the branch-level collections are independent, no such reweighting can attain a smaller stochastic contribution than \(V_{\mathrm{intra}}/(N K_{\mathrm{eff}}(K)) + V_{\mathrm{inter}}/N\) in Eq.~\eqref{eq:hierarchical}, and the uniform temporal and branch weights of Eq.~\eqref{eq:trajectory-soup} attain this bound. This condition permits correlated residuals and is therefore weaker than independence, but it is substantive, since equal marginal variances and positive correlations alone do not imply equal covariance sums. 

\section{Experiments}
\label{sec:experiments}

\subsection{Experimental Setup}
\label{sec:setup}

\textbf{Model and mid-training trajectory settings.}
We conduct our default experiments with Ling-3.0-Tiny~\citep{inclusionai2026ling3tiny}, a sparse MoE language model with 7.9B total and 1.3B parameters activated per token. We fork multiple independent trajectories from the same checkpoint. To induce controlled trajectory diversity, each branch differs from the default configuration along one or more of the following dimensions: data-shuffling seed, peak learning rate, global batch size, learning-rate schedule, and optimizer. The default branch horizon is $t=600$B tokens, which is sufficiently long for mid-training performance to approach saturation. Every branch saves a checkpoint each 25B tokens. Appendices~\ref{app:experiments-config} record the architecture, training recipe, and evaluation protocol.

\textbf{Merging baselines.}
Given a candidate pool, we compare four merging scopes: (i) \textbf{Single EXP Merge}, which combines checkpoints within a single experiment run (EXP); (ii) \textbf{Model Soup}, which combines the final checkpoint of each trajectory; (iii) \textbf{Full Soup}, which applies no selection and naively averages all checkpoints pooled from multiple trajectories, and which we therefore report as a selection ablation in \textbf{Sec}.~\ref{sec:ablations}; and (iv) our \textbf{Trajectory Soup}, which jointly selects and combines checkpoints both within and across trajectories. 

\textbf{Compute-budget comparisons.}
We distinguish compute-matched evaluation from compute scaling. In the \textbf{Limited} setting, $N$ trajectories share a total budget of $T=Nt$ tokens (each covering $T/N$ tokens) and are compared compute-matched against a single trajectory trained for the same $T$. Reported budgets denote aggregate consumption. In the \textbf{Extended} setting, we merge $N$ fully trained trajectories and increase $N$. Reported budgets denote per-trajectory consumption, testing whether merging converts an expanded budget ($t \to Nt$ tokens) into improved quality. 

\begin{figure*}[tb]
    \centering
    \includegraphics[width=\textwidth]{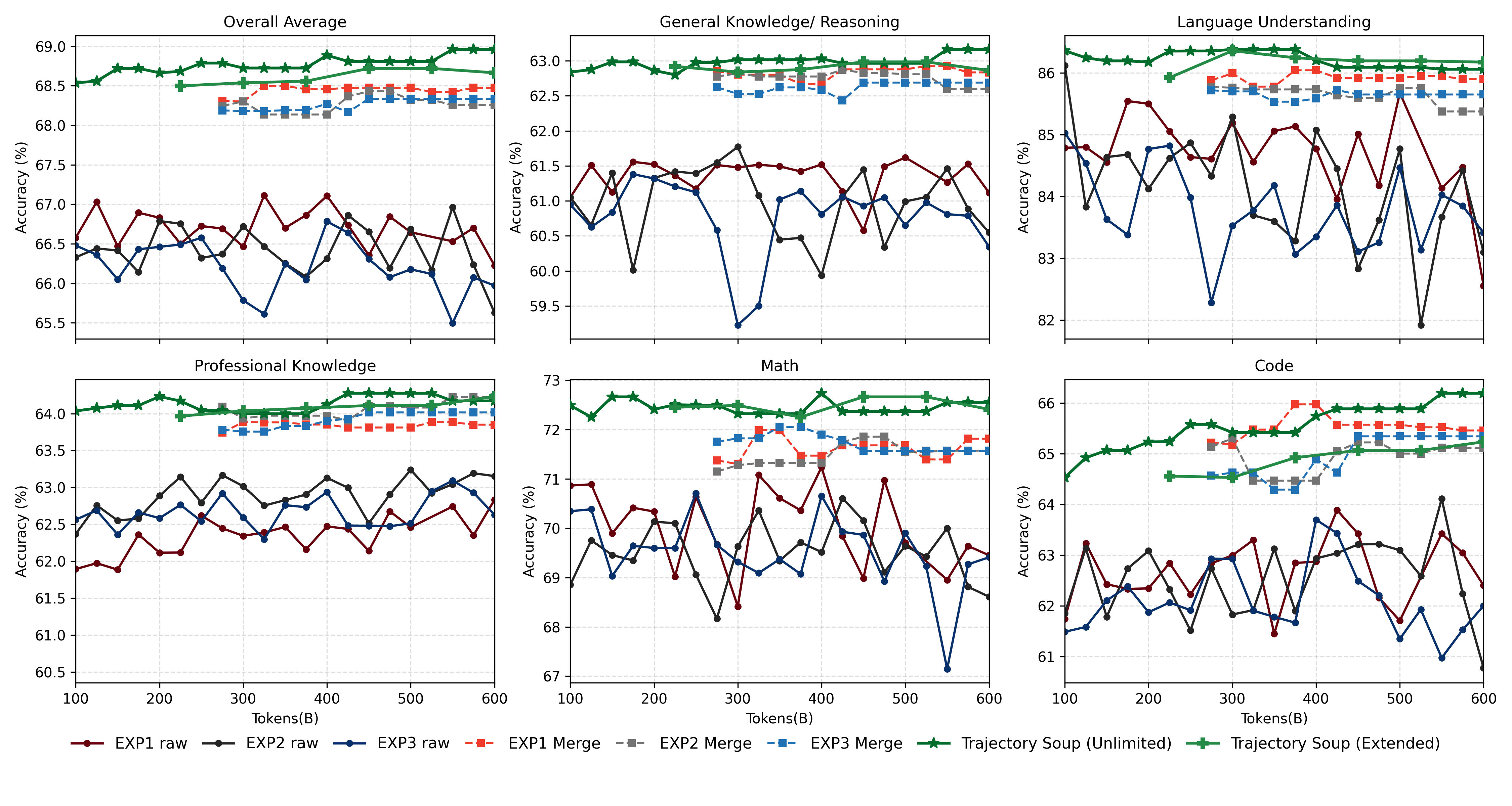}
    \caption{\textbf{Performance of different mid-training trajectories and merging strategies as the training-token budget increases.} Panels report the overall average accuracy and the five capability categories. Branch identities, merge sizes, and the Limited and Extended budget conventions follow \textbf{Sec}.~\ref{sec:setup}.}
    \label{fig:traj-soup-performance-curves}
\end{figure*}

\subsection{Main Results of Trajectory Soup}
\label{sec:main-results}

\textbf{Intra- and Inter-trajectory Merging Effects during Mid-training.} We systematically compare checkpoint-merging strategies during mid-training, focusing on how intra- and inter-trajectory effects interact across three independently trained trajectories: EXP1 (Baseline), EXP2 (Data Shuffle), and EXP3 (Hyperparameter Change). \textbf{Tab}.~\ref{tab:tiny-mid-train-main} summarizes the best candidate produced by each strategy. Inter-trajectory diversity yields further gains over intra-trajectory merging. In the compute-matched setting, Trajectory Soup (Limited) outperforms the strongest single-trajectory merge and Model Soup (Limited). In the full-trajectory setting, Trajectory Soup (Extended) achieves the best overall average of 68.96, exceeding the strongest single-trajectory merge and Model Soup (Extended). These results suggest that the gain cannot be attributed solely to either trajectory diversity or the number of merged checkpoints. Instead, effective scaling requires jointly exploiting inter-trajectory complementarity and selectively filtering checkpoints within each trajectory.
The complete performance curves over increasing training-token budgets are deferred to \textbf{Fig}.~\ref{fig:traj-soup-performance-curves}, which shows that the ordering below holds throughout mid-training.

\begin{table}[tb]
\centering
\caption{Performance comparison of different checkpoint-merging strategies on the mid-training evaluation suite. Each row reports the best evaluated configuration of its strategy; the corresponding checkpoint counts are recorded in Appendix~\ref{app:protocol}.}
\label{tab:tiny-mid-train-main}
\resizebox{0.95\linewidth}{!}{
\begin{tabular}{lcccccc}
\toprule
Base Model & \begin{tabular}[c]{@{}c@{}}General Knowledge\\\& Reasoning\end{tabular}
      & \begin{tabular}[c]{@{}c@{}}Language\\Modeling\end{tabular}
      & \begin{tabular}[c]{@{}c@{}}Professional\\Knowledge\end{tabular}
      & Math
      & Code
      & \begin{tabular}[c]{@{}c@{}}Overall\\Average\end{tabular} \\
\midrule
Single-Trajectory Merge                 & 62.80          & 85.86          & 63.70          & 72.33          & 65.36          & 68.55          \\
\midrule
Model Soup (Limited)       & 62.74          & 85.80          & 63.59          & 72.53          & 64.74          & 68.43          \\
Model Soup (Extended)      & 62.78          & 85.95          & 63.68          & 72.49          & 65.78          & 68.67          \\
\midrule
Trajectory Soup (Limited)  & 62.99          & \textbf{86.19} & 64.11          & \textbf{72.66} & 65.07          & 68.72          \\
Trajectory Soup (Extended) & \textbf{63.16} & 86.06          & \textbf{64.17} & 72.55          & \textbf{66.19} & \textbf{68.96} \\
\bottomrule
\end{tabular}
}
\end{table}

\begin{table}[tb]
\centering
\caption{Post-training performance after applying the same SFT procedure to mid-training checkpoints produced by different merging strategies. The merging protocols are those of \textbf{Tab}.~\ref{tab:tiny-mid-train-main}.}
\resizebox{0.95\linewidth}{!}{
\begin{tabular}{lccccccc}
\toprule
Instruct Model & Math
      & Code
      & Knowledge
      & Reasoning
      & \begin{tabular}[c]{@{}c@{}}Instruction\\Following\end{tabular}
      & \begin{tabular}[c]{@{}c@{}}Function\\Call\end{tabular}
      & \begin{tabular}[c]{@{}c@{}}Overall\\Average\end{tabular} \\
\midrule
Single-Trajectory Merge                 & 68.30          & 46.58          & \textbf{70.51} & 62.96          & 59.46          & 50.67          & 61.11          \\
\midrule
Model Soup (Limited)       & 68.50          & 47.59          & 69.36          & 63.56          & 58.79          & 51.52          & 61.05          \\
Model Soup (Extended)      & 68.63          & 46.76          & 69.65          & 64.41          & 59.29          & 51.63          & 61.23          \\
\midrule
Trajectory Soup (Limited)  & \textbf{69.44} & \textbf{48.33} & 69.81          & 63.60          & 58.88          & 51.49          & 61.39          \\
Trajectory Soup (Extended) & 68.53          & 47.50          & 69.26          & 64.48          & \textbf{60.38} & \textbf{52.85} & \textbf{61.52} \\
\bottomrule
\end{tabular}
}
\label{tab:tiny-post-train-main}
\end{table}

\textbf{Long-term Implications for Post-training.} The preceding experiments establish that Trajectory Soup achieves the strongest aggregate performance at the mid-training stage, which naturally raises the question of whether this advantage persists after post-training. To investigate this, we use the mid-training checkpoints produced by each merging strategy as base models, apply an identical SFT recipe, and evaluate the post-training capabilities, with results reported in \textbf{Tab}.~\ref{tab:tiny-post-train-main}. The advantage of Trajectory Soup is largely preserved after SFT, indicating that our method provides a stronger initialization whose benefits transfer to subsequent post-training. 

\begin{table}[tb]
\centering
\caption{\textbf{Comparison of single-trajectory merging and cross-trajectory Trajectory Soup during mid-training of the smaller 2B-parameter MoE model.} The Limited and Extended budget conventions follow \textbf{Sec}.~\ref{sec:setup}.}
\resizebox{0.95\linewidth}{!}{
\begin{tabular}{lcccccc}
\toprule
Model & \begin{tabular}[c]{@{}c@{}}General Knowledge\\\& Reasoning\end{tabular}
      & \begin{tabular}[c]{@{}c@{}}Language\\Modeling\end{tabular}
      & \begin{tabular}[c]{@{}c@{}}Professional\\Knowledge\end{tabular}
      & Math
      & Code
      & \begin{tabular}[c]{@{}c@{}}Overall\\Average\end{tabular} \\
\midrule
Single-Trajectory Merge                 & 47.71          & 72.85          & 43.41          & 54.21          & \textbf{35.96} & 49.55          \\
\midrule
Trajectory Soup (Limited)  & 47.88          & 73.32          & 43.70          & 54.13          & 35.77          & 49.64          \\
Trajectory Soup (Extended) & \textbf{48.35} & \textbf{73.98} & \textbf{44.37} & \textbf{54.87} & 35.51          & \textbf{50.07} \\
\bottomrule
\end{tabular}
}
\label{tab:e32a8-mid-train-main}
\end{table}

\subsection{Empirical Analysis of Trajectory Soup}

\subsubsection{Robustness across models and learning-rate schedules.}
To examine whether the benefits of Trajectory Soup depend on a particular base model or learning-rate schedule, we repeat our experiments on a smaller 2B-parameter MoE model with 32 experts, activating 8 experts per token, using a warmup-stable-decay (WSD) learning-rate schedule~\citep{hu2024minicpm}. We compare intra-trajectory averaging within each trajectory against our Trajectory Soup. For each strategy, \textbf{Tab}.~\ref{tab:e32a8-mid-train-main} reports the merging results. These results reproduce the aggregate trend observed under our default configuration. Under the compute-matched Limited setting, Trajectory Soup achieves an overall average of 49.64, slightly outperforming the strongest intra-trajectory baseline, despite each constituent trajectory receiving only half of the total token budget. This suggests that the complementarity introduced by trajectory diversity can compensate for reduced training-token coverage along each individual trajectory. Under the Extended setting, Trajectory Soup further improves the overall average to 50.07, exceeding the best intra-trajectory result. Together with the results obtained under the default setup, these findings indicate that the gains from Trajectory Soup are not confined to a single model scale or scheduling choice, but remain reproducible on a smaller MoE model trained with WSD.

\begin{table}[tb]
\centering
\caption{Ablation of merging coefficients and checkpoint-selection strategies. Every row follows the Trajectory Soup (Extended) setting of \textbf{Tab}.~\ref{tab:tiny-mid-train-main}, with Baseline the default configuration of \textbf{Sec}.~\ref{sec:setup}.}
\resizebox{0.95\linewidth}{!}{
\begin{tabular}{lllcccccc}
\toprule
Ablation & Coefficient & Selection
      & \begin{tabular}[c]{@{}c@{}}General Knowledge\\\& Reasoning\end{tabular}
      & \begin{tabular}[c]{@{}c@{}}Language\\Modeling\end{tabular}
      & \begin{tabular}[c]{@{}c@{}}Professional\\Knowledge\end{tabular}
      & Math
      & Code
      & \begin{tabular}[c]{@{}c@{}}Overall\\Average\end{tabular} \\
\midrule
Baseline     & EQUAL & Top-K each    & 63.16          & 86.06          & 64.17          & 72.55          & \textbf{66.19} & \textbf{68.96} \\
\midrule
\multirow{3}{*}{Coefficient}
             & 1SQRT & Top-K each    & \textbf{63.22} & 85.87          & 64.21          & 72.26          & 66.00          & 68.85          \\
             & RANK  & Top-K each    & 63.12          & 85.91          & 64.16          & 72.24          & 65.94          & 68.80          \\
             & RSQRT & Top-K each    & 63.04          & 86.03          & 64.26          & \textbf{72.59} & 65.97          & 68.89          \\
\midrule
\multirow{3}{*}{Selection}
             & EQUAL & Global top-NK & 62.99          & \textbf{86.24} & 64.18          & 72.46          & 65.43          & 68.76          \\
             & EQUAL & Tail-K per branch & 62.78          & 84.97          & \textbf{64.29} & 71.77          & 65.25          & 68.35          \\
             & EQUAL & All checkpoints & 62.94          & 85.88          & 64.26          & 71.79          & 65.70          & 68.60          \\
\bottomrule
\end{tabular}
}
\label{tab:tiny-coefficient-selection-ablation}
\end{table}

\subsubsection{Ablation Studies on the Trajectory Soup Configuration}
\label{sec:ablations}

We ablate three key design choices in Trajectory Soup: (1) how merge coefficients are assigned, (2) how checkpoints are selected, and (3) how the checkpoint budget is allocated across trajectories.

\textbf{Effect of merge coefficients.}
We compare uniform averaging against three rank-based weighting schemes, detailed in Appendix~\ref{app:merge-coefficients}. As shown in \textbf{Tab}.~\ref{tab:tiny-coefficient-selection-ablation}, none of them improves upon the equal-weight baseline. Once checkpoint quality is controlled through selection, more elaborate coefficient design provides little additional benefit, so uniform averaging is the strongest evaluated scheme and avoids introducing additional hyperparameters.

\textbf{Effect of checkpoint selection.}
We next fix merge coefficients to be uniform and consider three alternative selection strategies, also defined in Appendix~\ref{app:merge-coefficients}. All three underperform the default \emph{Top-$K$ each} strategy. These results identify both checkpoint quality and balanced trajectory representation as important selection criteria.

\textbf{Importance of Balanced Allocation across Trajectories.} 
\textbf{Fig}.~\ref{fig:traj-soup-asym-allocation} tests balanced representation directly: we fix the total number of merged checkpoints to 16 and vary their allocation between EXP1 and EXP3 from $2{:}14$ to $14{:}2$. Performance follows an approximately inverted-U-shaped profile centered on the balanced allocation, with the symmetric $8{:}8$ configuration attaining the highest overall average. Sufficiently balanced participation from both trajectories is therefore necessary to capture their complementary information, supporting symmetric allocation as a robust default.

Overall, the three ablations consistently indicate that the effectiveness of Trajectory Soup is driven by the combination of trajectory diversity, checkpoint quality, and balanced representation, rather than by sophisticated coefficient design. Among the configurations evaluated, the simple combination of uniform averaging, top-$K$ selection within each trajectory, and symmetric cross-trajectory allocation provides the strongest aggregate performance while requiring minimal additional tuning.


%

\begin{figure*}[t]
    \centering
    \begin{minipage}[t]{0.48\textwidth}
        \centering
        \includegraphics[width=\linewidth]{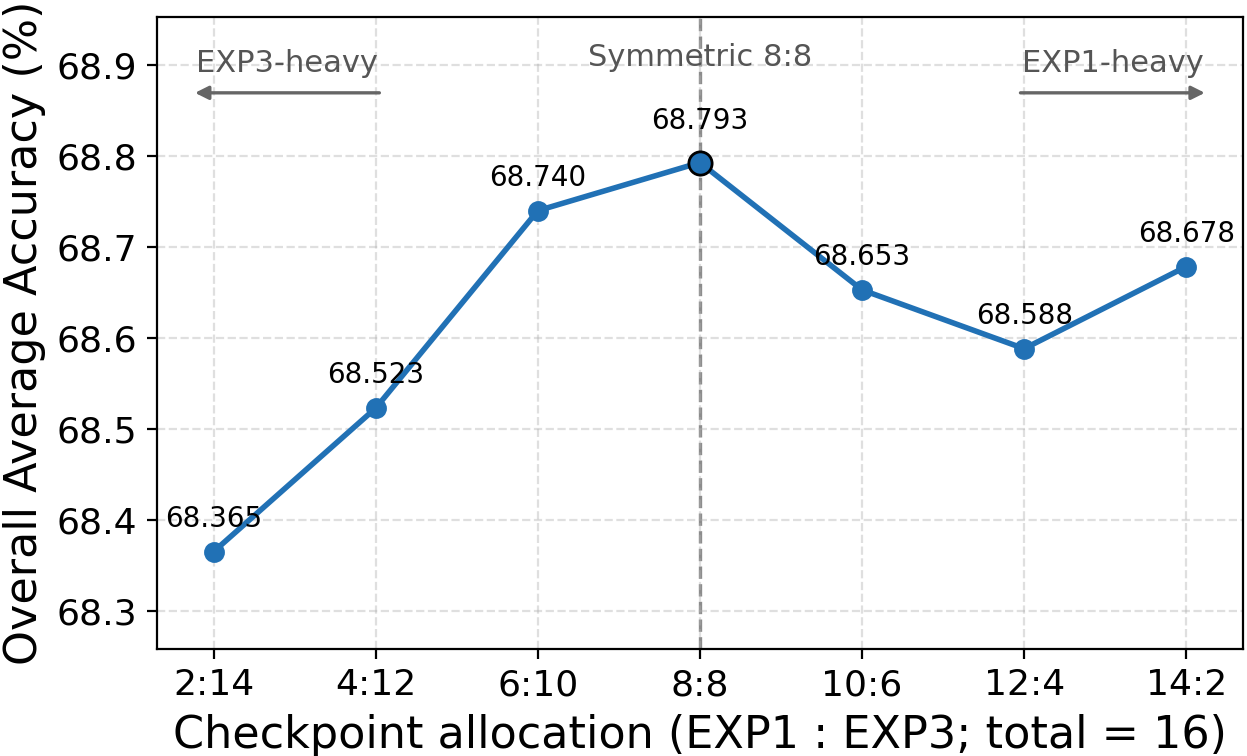}
        \caption{\textbf{Effect of asymmetric merging allocation on Trajectory Soup performance.}}
        \label{fig:traj-soup-asym-allocation}
    \end{minipage}
    \hfill
    \begin{minipage}[t]{0.48\textwidth}
        \centering
        \includegraphics[width=\linewidth]{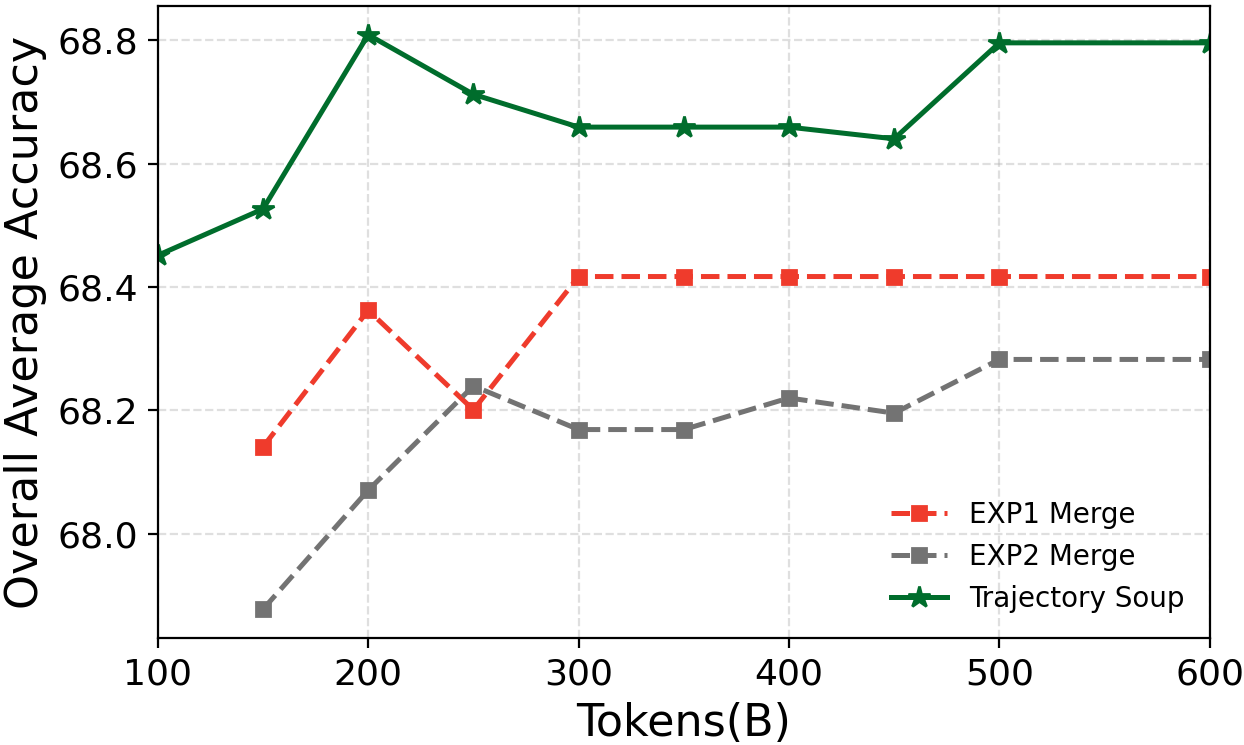}
        \caption{\textbf{Performance of densely sampled single run merging and equally sized sparse Trajectory Soup.}}
        \label{fig:traj-soup-sample-ablation}
    \end{minipage}
\end{figure*}

\subsubsection{Trajectory Diversity over Sampling Density}
\label{sec:sampling-density}

The preceding experiments show that Trajectory Soup consistently outperforms intra-trajectory merging throughout mid-training. However, this advantage could conceivably arise from a trivial confound: \textit{Trajectory Soup may simply draw from a larger checkpoint pool, thereby offering more candidates or a higher effective sampling density.} To isolate the contribution of trajectory diversity, we conduct a controlled comparison in which both the candidate-pool size and the number of merged checkpoints are strictly matched. Specifically, we fix the merge size to $M=12$ and compare densely sampled intra-trajectory merging with sparsely sampled cross-trajectory merging. In our experiment, EXP1 Merge and EXP2 Merge sample checkpoints every 12.5B tokens along their respective trajectories and uniformly average the top 12 checkpoints. In contrast, Trajectory Soup halves the per-trajectory sampling density by sampling checkpoints every 25B tokens from each trajectory and uniformly averages the top six checkpoints from each, again totaling 12. Thus, at a per-trajectory token coverage of $t$, both settings draw from exactly $t/12.5\mathrm{B}$ candidate checkpoints and merge the same number of checkpoints, and the only systematic difference is whether these candidates originate from a single trajectory or from two independently optimized ones. As shown in \textbf{Fig}.~\ref{fig:traj-soup-sample-ablation}, Trajectory Soup outperforms both intra-trajectory baselines at every comparable token budget, demonstrating that its gains cannot be attributed to a larger candidate pool or denser checkpoint sampling. This controlled comparison therefore provides direct evidence that the gains of Trajectory Soup stem from complementary information accumulated along independently evolved trajectories, rather than from increased checkpoint availability.

\subsubsection{Scaling the Number of Trajectories and Merged Checkpoints}
\label{sec:scaling}
We further investigate how Trajectory Soup scales along two axes: the number of participating trajectories $N$ and the number of top-ranked checkpoints $K$ contributed by each. For every $(N,K)$ configuration we uniformly average the top $K$ checkpoints of each trajectory, and \textbf{Fig}.~\ref{fig:traj-soup-trajectory-scaling} reports the Overall Average against the total number of merged checkpoints, revealing two patterns. First, \emph{trajectory diversity, rather than checkpoint depth, drives performance}: the best attainable score rises monotonically from $68.79$ with two trajectories to $69.08$ with five, while the increments steadily shrink. This is consistent with the standard interpretation of weight averaging, where uniform averaging cancels trajectory-specific errors only when the merged solutions reside in a shared low-loss basin while making quasi-independent mistakes, so a new trajectory helps most when it expands coverage of the basin, and its marginal value fades as the covered region saturates. Second, \emph{the soup benefits most from being selective}: all groups peak within a narrow window of roughly ten to sixteen merged checkpoints, beyond which enlarging the pool brings oscillation or steady decline, with the four-trajectory soup eventually falling below the two-trajectory optimum. This makes Trajectory Soup practical to scale, adding trajectories reliably raises the performance ceiling, and retaining only a handful of top checkpoints per trajectory is sufficient to nearly reach it.

\subsubsection{Directional Diversity Predicts Interpolation Gain}
\label{sec:direction-gain}

\begin{figure*}[t]
    \centering
    \begin{minipage}[t]{0.48\textwidth}
        \centering
        \includegraphics[width=\linewidth]{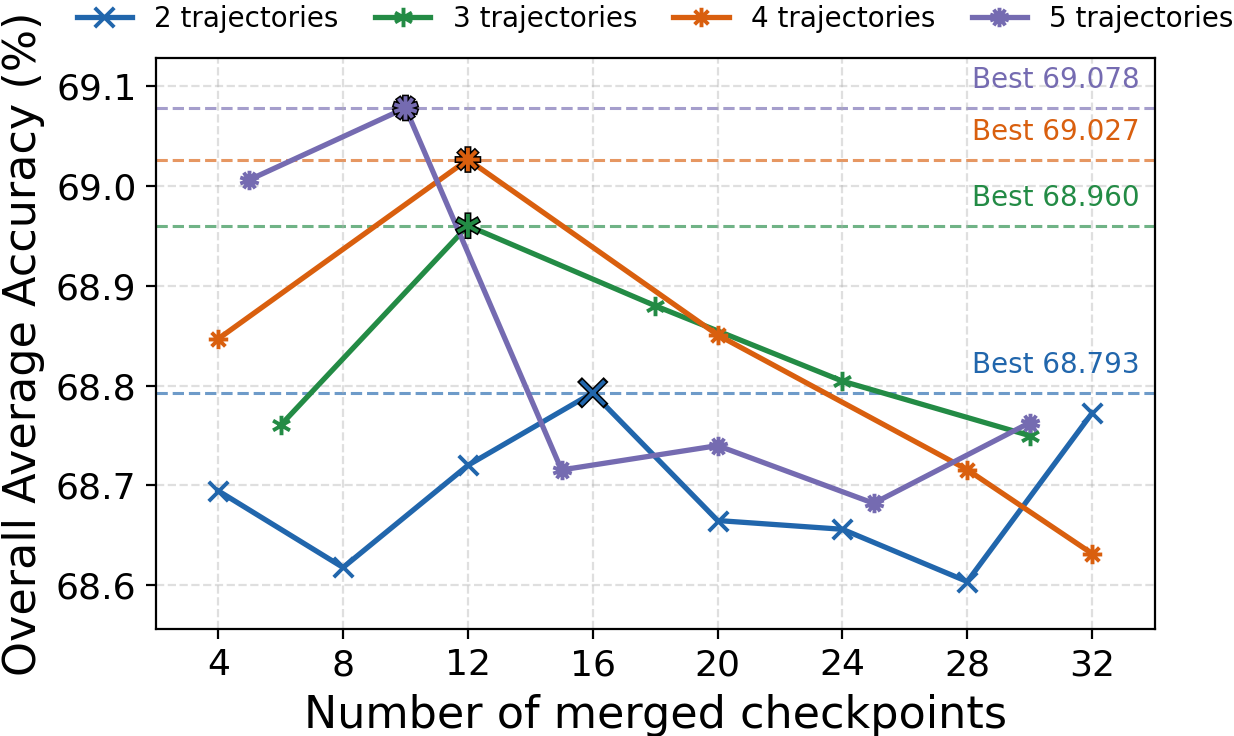}
        \caption{\textbf{Scaling of multi-trajectory Top-$K$ Trajectory-Soup with the number of merged checkpoints.}}
        \label{fig:traj-soup-trajectory-scaling}
    \end{minipage}
    \hfill
    \begin{minipage}[t]{0.48\textwidth}
        \centering
        \includegraphics[width=\linewidth]{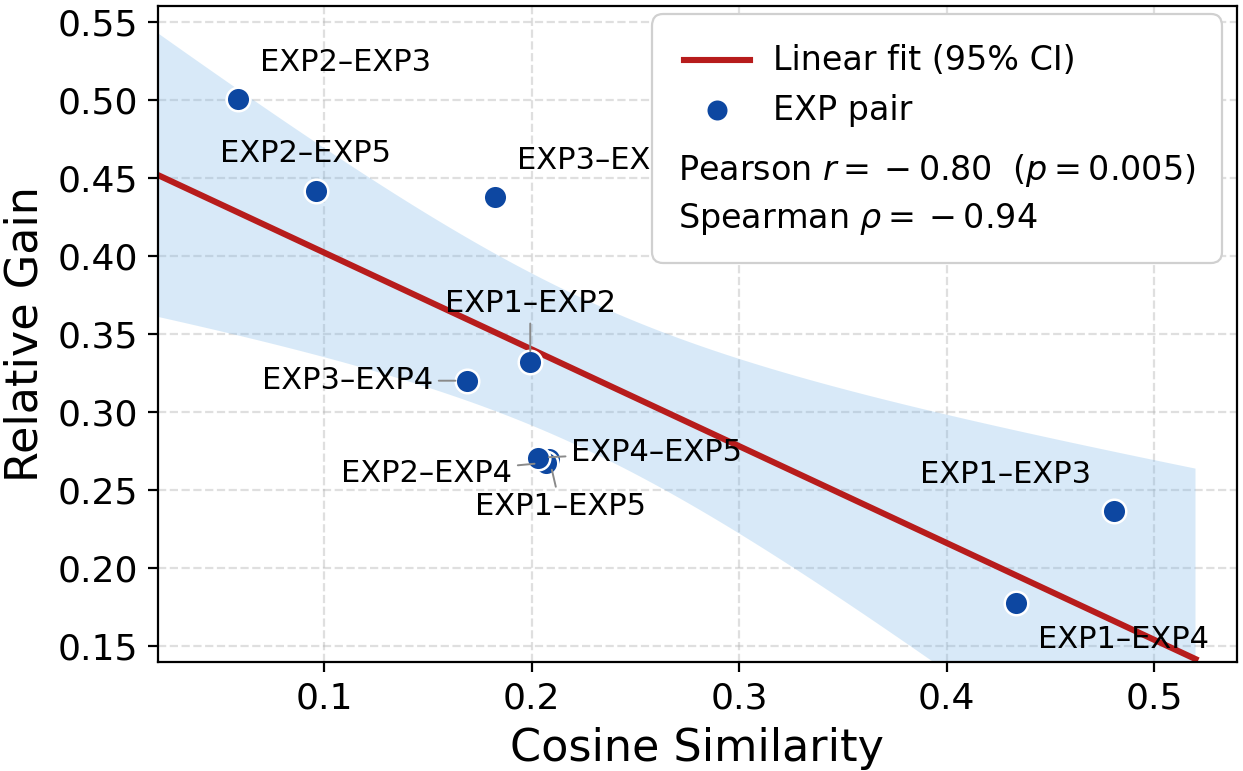}
        \caption{\textbf{Weaker directional alignment predicts larger interpolation gains.}}
        \label{fig:direction-gain}
    \end{minipage}
\end{figure*}

The scaling results show that adding trajectories improves the merged model, but they do not identify \emph{which} branches contribute the improvement or why. We therefore test whether pairwise branch geometry predicts the benefit of merging. \textbf{Fig}.~\ref{fig:direction-gain} plots the cosine similarity of the two leading directions against the relative gain of interpolating the pair's endpoints on the overall evaluation score. Both quantities are defined in Appendix~\ref{app:geometry}. The association is strongly negative and nearly monotone, with Pearson $r{=}{-}0.80$ ($p{=}0.005$) and Spearman $\rho{=}{-}0.94$: the near-orthogonal pairs gain the most and the most aligned pairs the least. Directional diversity therefore explains the scaling trend above and supplies a cheaply computable criterion, available before any merging is performed, for selecting which branches to merge.
\section{Related Work}
\label{sec:related}

\subsection{Mid-Training and Horizon-Extensible Scaling}
\label{sec:related-scaling}

Classical language-model scaling laws characterize the dependence of performance on model size, training data, and compute. Compute-optimal training rules balance parameter count and token consumption~\citep{hoffmann2022training}. In data-constrained regimes, the effective value of repeated tokens decreases as repetition grows~\citep{muennighoff2023scaling}. These results motivate distinguishing additional token consumption from the useful progress obtained from it.
Training schedules provide flexibility in extending a run. MiniCPM introduces the Warmup--Stable--Decay schedule, whose stable phase supports continued training followed by a cooldown~\citep{hu2024minicpm}. The river-valley analysis of WSD explains how a large learning rate can support progress along a slowly varying direction while maintaining fluctuations in sharper directions~\citep{wen2024understanding}. Complementing this optimization perspective, mid-training studies examine how mixtures of general and specialized data improve the starting point for post-training~\citep{liu2025midtraining}. Our study adds trajectory count to the allocation problem and examines selective weight-space aggregation under both matched and expanded training budgets.

\subsection{Checkpoint Averaging Along a Single Trajectory}
\label{sec:related-checkpoint-averaging}

Stochastic Weight Averaging combines iterates from constant or cyclical learning-rate training and often reaches flatter solutions with improved generalization~\citep{izmailov2018averaging}. Trainable Weight Averaging instead learns the averaging coefficients by optimizing in the low-dimensional subspace spanned by the candidate checkpoints rather than fixing them in advance~\citep{li2023trainable}. In language-model pre-training, \citet{sanyal2023early} show that averaging can improve convergence and downstream performance, with larger gains under high learning rates and sufficiently spaced checkpoints. Pre-trained Model Averaging studies checkpoint merging across dense and MoE language models and relates stable-phase averages to annealing behavior~\citep{li2025model}. WSM develops the connection through the effective coefficients assigned to historical updates~\citep{tian2026wsm}.
Extra-Merge identifies approximately one-dimensional structure in late-stage merged trajectories and uses this structure for extrapolation~\citep{zhou2026extra}. These approaches establish the value of temporal aggregation. We build on this foundation by combining selected temporal averages across independently evolved branches and studying how this changes the allocation of training compute.

\subsection{Combining Multiple Training Trajectories}
\label{sec:related-multi-trajectory}

Prediction ensembles retain multiple models at inference time. Weight-space combinations can consolidate compatible models into one parameter set. Model Soups demonstrates effective averaging of models fine-tuned from a common initialization with different hyperparameters~\citep{wortsman2022model}. 
In language modeling, Branch-Train-Merge trains domain-specialized expert language models that can be ensembled or averaged, and studies their performance under controlled training cost~\citep{li2022branch}. Branch-Train-MiX incorporates independently trained feedforward components into MoE layers, followed by training that learns routing~\citep{sukhbaatar2024branch}. 
The geometry underlying such combinations is related to mode connectivity. \citet{garipov2018loss} construct low-loss curves between trained solutions, while \citet{frankle2020linear} study linear connectivity under different realizations of SGD noise after a shared training history. Our interpolation analysis examines the straight segments between mid-training anchors, which are directly relevant to arithmetic parameter averaging.

\section{Discussion and Limitations}
\label{sec:discussion}

\textbf{Scope of the compute accounting.} Our allocation comparison holds processed training tokens fixed at a common architecture and sequence length, which isolates the allocation question but charges nothing for the machinery around it. Checkpoint storage, the validation passes used for ranking, and the search over how many trajectories and checkpoints to combine all consume resources that a complete development-cost account would attribute to each strategy, and accelerator utilization differs between one long run and several concurrent shorter ones. Making these costs explicit is the natural next step, because a cost model that prices a trajectory against a token would let the allocation be chosen by optimization rather than by sweeping candidates.

\textbf{What the analysis assumes.} The theory is exact under a local quadratic model of the validation loss and inherits whatever error that model carries over the region spanned by the selected checkpoints and their averages. The geometric diagnostics are complementary but weaker. Direction angles characterize differences in a chosen parameter-space representation, and interpolation profiles probe particular averages, so neither establishes that the region containing all branches is jointly well behaved. Both are also measured only after the branches exist. A more useful account would predict which recipe perturbations generate error components that averaging can remove, turning diversity from a property screened after training into one designed before it.

\textbf{Where averaging stops paying.} Averaging cannot remove error that the branches share, so strongly correlated trajectories leave a floor that no amount of merging crosses, and perturbations chosen too aggressively push branches into regions where the merged model is worse than its members. The same tension governs the budget, since at a fixed total more branches means shorter runs, and admitting more checkpoints per branch eventually admits weak ones. Trajectory Soup therefore depends on selecting compatible and sufficiently trained branches together with a suitable per-branch checkpoint count, which we currently fix by validation search. Learning this allocation during training, or adapting it as branches diverge, would remove the remaining hand-tuning and is where we expect the largest gains to come from.

\section{Conclusion}
\label{sec:conclusion}

Mid-training has so far been scaled by making a single optimization trajectory longer, and that axis eventually stops paying. This paper treats the number of trajectories as an allocation variable of the same kind as parameters and tokens, and works out what it takes to spend a budget that way. Trajectory Soup forks independent branches from a shared checkpoint, selects the strongest checkpoints inside each branch, and consolidates the resulting anchors into a single model whose architecture, tokenizer, and inference cost are unchanged. A local bias and variance analysis accounts for the design, showing why the two averaging levels address different parts of the error, why uniform weights are the natural default at both levels, and why checkpoint selection carries a bias that bounds how many checkpoints are worth merging.
Weight-space consolidation is a general way to convert parallel exploration into one deliverable model, rather than a technique specific to mid-training. The same structure arises wherever a budget can be split across runs that share an initialization, including continued pretraining, domain adaptation, and the increasingly long post-training pipelines that follow. Realizing that generality calls for understanding which perturbations create removable error, how to price a trajectory against a token, and how far compatibility survives objectives that carry models far from their shared origin. Treating trajectory count as a standard dimension of the compute-scaling question, rather than a special case of model merging, is in our view the more productive framing for the work that follows.

\clearpage

\bibliographystyle{plainnat}
\bibliography{references}

\clearpage
\appendix
\renewcommand{\theequation}{A\arabic{equation}}
\renewcommand{\theHequation}{A\arabic{equation}}
\setcounter{equation}{0}


\section{Additional Theory for Hierarchical Merging}
\label{sec:theory-appendix}

This appendix records the modeling detail behind the main-text results. We make the random-effects assumptions of \textbf{Sec}.~\ref{sec:trajectory-effects} explicit and allow heterogeneous or correlated branches, examine how the selected checkpoint count trades residual reduction against bias, restore the fixed-budget coupling between branch count and branch length, state the variance optimality of the two uniform averages that \textbf{Sec}.~\ref{sec:equal-weights} reports in prose, and give the exact covariance condition it requires. Every loss identity below is exact under the local quadratic model of \textbf{Sec}.~\ref{sec:local-model}. Throughout Appendices~\ref{sec:theory-appendix} and \ref{app:derivations} we abbreviate the two anchor means of \textbf{Sec}.~\ref{sec:trajectory-effects} as $\mu_n(K)=\mathbb E[\bar\theta_n]$ and $m_n(K)=\mathbb E[\bar\theta_n\mid\mathscr F_n]$. To keep the account compact, we develop these extensions as narrative rather than as separate numbered claims, apart from the corollary of Appendix~\ref{app:equal-weights}, and give their derivations in place. The three theorems stated in the main text and that corollary are proved in Appendix~\ref{app:derivations}.

\subsection{Conditional Random Effects and General Covariances}
\label{app:covariance}

Theorem~\ref{thm:branch-loss} uses total covariance without requiring a particular training algorithm. Its shared-shift specialization can be represented by a local random-effects model. For fixed $t$, assume
\begin{equation}
\mathbb E[\theta_{n,(j)}\mid\mathscr F_n]
=\theta^\star+b_{\mathrm{com}}+b_{\mathrm{rec},n}+h_{n,j}+u_n,
\qquad\mathbb E[u_n]=0,
\qquad h_{n,1}=0,
\label{eq:conditional-mean-model}
\end{equation}
where $u_n$ is measurable with respect to $\mathscr F_n$ and common to all selected ranks on that branch, and the deterministic offsets absorb selection-induced changes in mean. Here $b_{\mathrm{com}}$ is a finite-horizon offset shared by every branch, $b_{\mathrm{rec},n}$ is the systematic offset of recipe $\psi_n$, and $h_{n,j}$ is the rank-$j$ contribution to the selection-induced shift. Write $\epsilon_{n,j}=\theta_{n,(j)}-\mathbb E[\theta_{n,(j)}\mid\mathscr F_n]$ for the conditionally centered fluctuation that \textbf{Sec}.~\ref{sec:two-levels} describes in words. Averaging over the selected ranks then gives $h_n(K)=K^{-1}\sum_{j=1}^K h_{n,j}$ and $\bar\epsilon_n(K)=K^{-1}\sum_{j=1}^K\epsilon_{n,j}$, so the mean displacement described in \textbf{Sec}.~\ref{sec:trajectory-effects} reads
\begin{equation}
\mu_n(K)-\theta^\star=b_{\mathrm{com}}+b_{\mathrm{rec},n}+h_n(K),\qquad h_n(1)=0.
\label{eq:mean-displacement}
\end{equation}
A single $u_n$ shared by all selected ranks is exactly the shared-shift condition that Theorem~\ref{thm:hierarchical} assumes: it makes $m_n(K)=\mu_n(K)+u_n$ and therefore $V_{\mathrm{inter},n}(K)=\frac12\operatorname{tr}(H\operatorname{Cov}(u_n))$ independent of $K$, so selection moves the anchor only through $h_n(K)$ and the residual average. This is a modeling assumption; if the conditional shift changes across ranks, the general term $V_{\mathrm{inter},n}(K)$ of Theorem~\ref{thm:branch-loss} retains that dependence.

The homogeneous independent-anchor case is what exposes the scaling factors, but a general branch collection needs the full anchor cross-covariance $\Gamma_{nn'}(K)=\operatorname{Cov}(\bar\theta_n,\bar\theta_{n'})$ together with its scalar form $G_{nn'}(K)=\frac12\operatorname{tr}(H\Gamma_{nn'}(K))$. Two properties follow directly. First, $G(K)$ is symmetric because $\Gamma_{n'n}(K)=\Gamma_{nn'}(K)^\top$ and $H$ is symmetric, and it is positive semidefinite because $a^\top G(K)a=\frac12\mathbb E[\|\sum_{n}a_n(\bar\theta_n-\mu_n(K))\|_H^2]\geq0$ for every $a\in\mathbb R^N$. Second, the covariance of the branch-reweighted model $\theta_\alpha=\sum_n\alpha_n\bar\theta_n$, which keeps the uniform temporal average inside every branch, is $\sum_{n,n'}\alpha_n\alpha_{n'}\Gamma_{nn'}(K)$, so Theorem~\ref{thm:local-loss} gives, for fixed nonnegative weights summing to one,
\begin{equation}
\mathbb E[\mathcal L_Q(\theta_\alpha)]-\mathcal L^\star=B_\alpha(K)+\alpha^\top G(K)\alpha,
\qquad B_\alpha(K)=\tfrac12\|\mathbb E[\theta_\alpha]-\theta^\star\|_H^2.
\label{eq:general-weighted-loss}
\end{equation}
Setting every weight to $1/N$ turns the quadratic form into a double sum over the branch pairs,
\begin{equation}
\mathbb E[\mathcal L_Q(\theta_{\mathrm{Traj\text{-}Soup}})]-\mathcal L^\star
=B_{\mathrm{soup}}(K)+\frac1{2N^2}\sum_{n,n'=1}^N\operatorname{tr}\!\left(H\Gamma_{nn'}(K)\right).
\label{eq:general-cov-loss}
\end{equation}

Expanding the anchor as $\bar\theta_n=\theta^\star+b_{\mathrm{com}}+b_{\mathrm{rec},n}+h_n(K)+u_n+\bar\epsilon_n(K)$ resolves each block into three contributions: the covariance of the branch offsets $u_n$, the covariance of the averaged residuals $\bar\epsilon_n(K)$, and the two cross terms between an offset of one branch and the averaged residual of the other. Because $u_n$ is $\mathscr F_n$-measurable, conditional centering removes the cross terms on the diagonal and leaves $G_{nn}(K)=V_{\mathrm{intra},n}(K)+V_{\mathrm{inter},n}$, while placing no restriction on the cross terms for $n\ne n'$. Independent selected anchors make the off-diagonal $\Gamma_{nn'}(K)$ vanish, which reduces \textbf{Eq}.~\eqref{eq:general-cov-loss} to the heterogeneous form $B_{\mathrm{soup}}(K)+N^{-2}\sum_{n=1}^N[V_{\mathrm{intra},n}(K)+V_{\mathrm{inter},n}]$. Independence of the full random-effects collections is a stronger condition that makes every off-diagonal block vanish separately.

Equal scalar loss-weighted contributions therefore suffice for Theorem~\ref{thm:hierarchical}, and the full covariance matrices may still differ. Independent training followed by selection performed separately within each branch preserves anchor independence when the shared setup is fixed, whereas joint selection or a coupled compatibility decision leaves the off-diagonal blocks in place and \textbf{Eq}.~\eqref{eq:general-cov-loss} applies. Those blocks also bound what branch averaging can achieve, because a collection with positive average off-diagonal contribution drives the branch term toward that average rather than toward zero as $N$ grows, which is the correlated error floor discussed in Appendix~\ref{sec:limitations}.

Weighting the anchors unequally is worthwhile exactly when their contributions differ. For independent anchors with $v_n(K)=V_{\mathrm{intra},n}(K)+V_{\mathrm{inter},n}>0$, the variance term of \textbf{Eq}.~\eqref{eq:general-weighted-loss} is $\sum_n v_n(K)\alpha_n^2$, and Cauchy--Schwarz applied to $\sum_n\bigl(\sqrt{v_n(K)}\alpha_n\bigr)v_n(K)^{-1/2}=1$ gives $\sum_n v_n(K)\alpha_n^2\geq(\sum_n v_n(K)^{-1})^{-1}$, with equality only when $\alpha_n$ is proportional to $v_n(K)^{-1}$. The unique minimizer is thus the inverse-variance weighting $\alpha_n^{\mathrm{var}}(K)=v_n(K)^{-1}/\sum_{n'}v_{n'}(K)^{-1}$. If instead $G(K)$ is exchangeable with common diagonal $v$ and common off-diagonal $c$, then $\alpha^\top G(K)\alpha=c+(v-c)\sum_n\alpha_n^2$, where $v-c\geq0$ follows from positive semidefiniteness on vectors orthogonal to $\mathbf 1$, so equal weights minimize the variance and do so uniquely when $v>c$. The same exchangeable argument transfers to the residual weights over the $K$ ranks within one branch once the branch covariance matrix is replaced by the rank covariance matrix, while rank-dependent means continue to act through the bias instead. Both statements concern fixed weights, and estimating weights from the same realized checkpoints would require accounting for the joint randomness of weights and anchors.

The effective checkpoint count summarizes the rank covariance in a single scalar, so its interpretation needs both the marginal variation and the cross-checkpoint dependence. For a fixed branch, write $c_{jj'}=\frac12\operatorname{tr}(H\operatorname{Cov}(\epsilon_{n,j},\epsilon_{n,j'}))$ and assume the common rank-1 value $c_{jj}=V_{\mathrm{intra}}>0$. Expanding the covariance of $\bar\epsilon_n(K)$ and taking its loss-weighted trace gives
\begin{equation}
V_{\mathrm{intra},n}(K)=\frac1{K^2}\sum_{j,j'=1}^K c_{jj'},
\qquad
K_{\mathrm{eff}}(K)=\frac{K^2V_{\mathrm{intra}}}{\sum_{j,j'=1}^K c_{jj'}}.
\label{eq:effective-count-details}
\end{equation}
Since the residuals are mean zero, $c_{jj'}=\frac12\mathbb E[(H^{1/2}\epsilon_{n,j})^\top(H^{1/2}\epsilon_{n,j'})]$, and Cauchy--Schwarz bounds $|c_{jj'}|\leq\sqrt{c_{jj}c_{j'j'}}=V_{\mathrm{intra}}$. If every entry is nonnegative, the covariance sum lies between $KV_{\mathrm{intra}}$ and $K^2V_{\mathrm{intra}}$, so $1\leq K_{\mathrm{eff}}(K)\leq K$, with the upper bound attained exactly when the off-diagonal entries vanish and strict as soon as their sum is positive. Independent residuals attain that bound. Negative covariances can instead push $K_{\mathrm{eff}}(K)$ above $K$, heterogeneous ranks invalidate the equal-marginal premise, and when $V_{\mathrm{intra},n}(K)=0$ the convention $K_{\mathrm{eff}}(K)=\infty$ returns the correct zero contribution. In such settings the general definition in \textbf{Sec}.~\ref{sec:two-levels} remains a variance ratio. In particular, increasing $K$ under validation ranking need not increase $K_{\mathrm{eff}}(K)$, which is what the next subsection turns into a tradeoff.

\subsection{The Tradeoff in Selecting More Checkpoints}
\label{app:selection-bias}

At fixed $N$ and $t$, adding a lower-ranked checkpoint changes the mean anchor through $h_n(K)$ and the residual covariance through \textbf{Eq}.~\eqref{eq:effective-count-details}. A lower individual validation rank does not by itself determine the loss of the resulting average, so the relevant comparison is between the change in bias and the change in the stochastic contribution. Suppose Theorem~\ref{thm:hierarchical} applies to the same branch collection at $K$ and $K+1$, with $K$-independent $V_{\mathrm{inter}}$ and reference $V_{\mathrm{intra}}$, and write $\theta_{\mathrm{Traj\text{-}Soup}}(K)$ for the soup built from that collection at count $K$, holding $N$ and $t$ fixed. Applying \textbf{Eq}.~\eqref{eq:hierarchical} at both counts cancels $\mathcal L^\star$ and the branch term $V_{\mathrm{inter}}/N$, which leaves
\begin{equation}
\begin{aligned}
&\mathbb E[\mathcal L_Q(\theta_{\mathrm{Traj\text{-}Soup}}(K+1))]-\mathbb E[\mathcal L_Q(\theta_{\mathrm{Traj\text{-}Soup}}(K))]\\
&\qquad=B_{\mathrm{soup}}(K+1)-B_{\mathrm{soup}}(K)
+\frac{V_{\mathrm{intra}}}{N}
\left[\frac1{K_{\mathrm{eff}}(K+1)}-\frac1{K_{\mathrm{eff}}(K)}\right].
\end{aligned}
\label{eq:k-increment}
\end{equation}
When $K_{\mathrm{eff}}$ increases, the stochastic increment is negative, so an additional checkpoint improves the expected loss precisely when that reduction outweighs the bias increment. This is the mechanism behind an interior optimum, and it also leaves room for monotone improvement or monotone decline in other geometries. If $V_{\mathrm{inter},n}(K)$ varies with selection, its averaged increment must be retained, and if the branch cross-covariance varies with selection, the corresponding increment follows from \textbf{Eq}.~\eqref{eq:general-cov-loss}.

A specific bias model makes a finite optimum explicit. Take the illustrative pair $B_{\mathrm{soup}}(K)=B_{\mathrm{soup}}(1)+\gamma(K-1)^2$ with $\gamma>0$ and $K_{\mathrm{eff}}(K)=K$, which is a specialization we impose rather than a consequence of validation ranking. For continuous $K\geq1$ the resulting quadratic surrogate is
\begin{equation}
\Phi_N(K)=B_{\mathrm{soup}}(1)+\gamma(K-1)^2
+\frac{V_{\mathrm{intra}}}{NK}+\frac{V_{\mathrm{inter}}}{N}.
\label{eq:checkpoint-surrogate}
\end{equation}
Its derivatives are $\Phi_N'(K)=2\gamma(K-1)-V_{\mathrm{intra}}/(NK^2)$ and $\Phi_N''(K)=2\gamma+2V_{\mathrm{intra}}/(NK^3)>0$, so $\Phi_N$ is strictly convex on $K\geq1$. Its derivative is negative at $K=1$ and tends to $+\infty$, so there is a unique continuous minimizer $K_N^\star>1$, characterized by $2\gamma(K_N^\star)^2(K_N^\star-1)=V_{\mathrm{intra}}/N$. The map $K\mapsto K^2(K-1)$ is strictly increasing for $K>1$ because its derivative is $K(3K-2)>0$, and the right-hand side decreases in $N$, so the preferred count $K_N^\star$ decreases as branches are added. On a bounded interval of feasible counts, strict convexity means an integer optimum lies among the feasible floor and ceiling of the clipped continuous minimizer, and a sparse experimental grid requires comparing the available grid points directly. This specialization explains how additional branches can reduce the marginal value of temporal averaging, and it is consistent with the checkpoint-count shift reported in \textbf{Sec}.~\ref{sec:ablations}. Its predicted count depends on the assumed bias law, the available checkpoints, and the residual correlations, so a $K$-dependent branch component or a departure from the quadratic model changes the objective.

\subsection{The Fixed-Budget Allocation Tradeoff}
\label{app:fixed-budget}

The comparisons above keep the per-branch horizon fixed. The Limited setting instead fixes the total branch-training token budget $T$ and uses $t=T/N$, and restoring that dependence makes the competition between serial progress and averaging explicit. Let $\mathcal E_{\mathrm{Traj\text{-}Soup}}(T,N,K)$ denote the expected validation excess loss of $\theta_{\mathrm{Traj\text{-}Soup}}(N,T/N,K)$, and suppose Theorem~\ref{thm:hierarchical} holds at every compared horizon with a common reference $\theta^\star$ and curvature $H$, which is what makes those excess losses comparable. Writing the branch horizon as an explicit first argument of the bias, of the two loss-weighted variances, and of the effective count, and substituting $t=T/N$ into \textbf{Eq}.~\eqref{eq:hierarchical}, gives
\begin{equation}
\mathcal E_{\mathrm{Traj\text{-}Soup}}(T,N,K)
=B_{\mathrm{soup}}(T/N,K)+\frac{V_{\mathrm{intra}}(T/N)}{N K_{\mathrm{eff}}(T/N,K)}
+\frac{V_{\mathrm{inter}}(T/N)}N
\label{eq:fixed-budget}
\end{equation}
for any feasible $N$ and $K$. Increasing $N$ supplies more branches while shortening each one. Shorter horizons may retain greater finite-training bias, averaging may reduce both persistent and residual variation, and checkpoint availability constrains the feasible $K$. In a serial-saturation regime the cost of shortening a branch can be small enough for the averaging gains to dominate, whereas earlier in training substantial useful serial progress can favor longer branches. This is a loss-accounting identity rather than an allocation rule, since it imposes the budget constraint without supplying the horizon dependence of the individual terms. An optimal $N$ and $K$ therefore requires further information about those functions and about the feasible candidate sets, and the budget counts training tokens, whose relation to matched compute follows the experimental accounting for the compared recipes and architectures.

\subsection{Variance-Optimal Weights at Both Levels}
\label{app:equal-weights}

Theorem~\ref{thm:hierarchical} takes the two uniform averages of Trajectory Soup as given, which invites the question of whether uniform weights are the right choice or merely a convenient one. For independent errors the answer is classical, since the covariance analysis of averaging assigns equal weight to exchangeable contributions~\citep{bishop2006pattern}. Transferring that principle to our setting requires care, because the averaged quantities are parameter fluctuations weighted by the validation Hessian, and the checkpoints entering each branch anchor are selected by validation loss rather than drawn independently. SWA also averages iterates uniformly~\citep{izmailov2018averaging}, but for selected checkpoints the optimality of that choice depends on their covariance structure. This subsection makes that dependence explicit and states the result that \textbf{Sec}.~\ref{sec:equal-weights} reports in prose.

Fix $N$, $K$, $t$, and the checkpoint-selection rule, and replace the two uniform averages by deterministic nonnegative weights satisfying $\sum_{n=1}^N\alpha_n=1$ and $\sum_{j=1}^K w_{n,j}=1$ for each branch. The resulting model is $\theta_{\alpha,w}=\sum_{n=1}^N\alpha_n\sum_{j=1}^K w_{n,j}\theta_{n,(j)}$, which recovers $\theta_{\mathrm{Traj\text{-}Soup}}$ of \textbf{Eq}.~\eqref{eq:trajectory-soup} at $\alpha_n=1/N$ and $w_{n,j}=1/K$. Theorem~\ref{thm:local-loss} splits its excess loss into
\begin{equation}
\mathbb E[\mathcal L_Q(\theta_{\alpha,w})]-\mathcal L^\star=B_{\alpha,w}(K)+\mathcal V_{\alpha,w}(K),
\label{eq:weighted-loss}
\end{equation}
where $B_{\alpha,w}(K)=\frac12\|\mathbb E[\theta_{\alpha,w}]-\theta^\star\|_H^2$ and $\mathcal V_{\alpha,w}(K)=\frac12\operatorname{tr}(H\operatorname{Cov}(\theta_{\alpha,w}))$. These weights act on the covariance of the conditionally centered residuals of Appendix~\ref{app:covariance}, which we collect into the positive semidefinite matrix $[G_n^\epsilon(K)]_{jj'}=\frac12\operatorname{tr}(H\operatorname{Cov}(\epsilon_{n,j},\epsilon_{n,j'}))$. Uniform temporal weights become variance-optimal as soon as every selected checkpoint carries the same aggregate covariance with the selected set. This condition permits correlated residuals and holds for exchangeable covariance, so it is weaker than independence. Throughout, $K_{\mathrm{eff}}(K)$ keeps the definition it received for the uniformly averaged anchor in \textbf{Sec}.~\ref{sec:two-levels}.

\begin{corollary}[Variance-optimal averaging at both levels]
\label{cor:equal-weights}
Under the homogeneous contributions of Theorem~\ref{thm:hierarchical}, assume that each selected rank has the same persistent conditional shift $u_n$ and that the full collections $(u_n,\epsilon_{n,1},\ldots,\epsilon_{n,K})$ are independent across branches. If $G_n^\epsilon(K)\mathbf 1_K=\lambda_n\mathbf 1_K$ for some scalar $\lambda_n$ for every branch, then
\begin{equation}
\mathcal V_{\alpha,w}(K)\geq\frac{V_{\mathrm{intra}}}{N K_{\mathrm{eff}}(K)}+\frac{V_{\mathrm{inter}}}{N}.
\label{eq:two-level-minimum-variance}
\end{equation}
Uniform temporal and branch weights, $w_{n,j}=1/K$ and $\alpha_n=1/N$, attain this bound.
\end{corollary}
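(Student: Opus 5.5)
The plan is to compute $\mathcal V_{\alpha,w}(K)$ explicitly from the random-effects decomposition of Appendix~\ref{app:covariance} and then minimize it in two nested stages, first over the temporal weights $w_{n,\cdot}$ inside each branch and then over the branch weights $\alpha$. Under the shared-shift assumption, each selected checkpoint is $\theta_{n,(j)}=\mu$-type deterministic terms $+\,u_n+\epsilon_{n,j}$. Because $\sum_j w_{n,j}=1$, the weighted branch anchor is
\[
\sum_j w_{n,j}\theta_{n,(j)}=\text{const}+u_n+\sum_j w_{n,j}\epsilon_{n,j}.
\]
Here the persistent shift enters with total weight one, regardless of $w$. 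Independence of the full collections $(u_n,\epsilon_{n,1},\ldots,\epsilon_{n,K})$ across branches removes all off-diagonal blocks. Within a branch, $u_n$ is $\mathscr F_n$-measurable and $\mathbb E[\epsilon_{n,j}\mid\mathscr F_n]=0$, so the cross terms vanish by the tower property, exactly as for $G_{nn}(K)$ in Appendix~\ref{app:covariance}. This gives
\[
\mathcal V_{\alpha,w}(K)=\sum_{n=1}^N\alpha_n^2\Bigl(V_{\mathrm{inter}}+w_n^\top G_n^\epsilon(K)\,w_n\Bigr),
\]
with $V_{\mathrm{inter}}=\tfrac12\operatorname{tr}(H\operatorname{Cov}(u_n))$ common across branches.

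The inner step minimizes the quadratic form $w_n^\top G_n^\epsilon w_n$ over the simplex. Since $G_n^\epsilon$ is positive semidefinite, the problem is convex, and a feasible point satisfying the Lagrange (KKT) stationarity condition $G_n^\epsilon w=\nu\mathbf 1$ is a global minimizer even on the larger affine set $\{\mathbf 1^\top w=1\}$. The hypothesis $G_n^\epsilon\mathbf 1=\lambda_n\mathbf 1$ says precisely that $w=\mathbf 1/K$ is such a point. A direct check avoids any invertibility issue. Write $w=\mathbf 1/K+d$ with $\mathbf 1^\top d=0$. Then
\[
w^\top G w=\tfrac1{K^2}\mathbf 1^\top G\mathbf 1+\tfrac{2\lambda_n}{K}\mathbf 1^\top d+d^\top G d\;\geq\;\tfrac1{K^2}\mathbf 1^\top G\mathbf 1 .
\]
By Eq.~\eqref{eq:effective-count-details}, $\tfrac1{K^2}\mathbf 1^\top G_n^\epsilon\mathbf 1=V_{\mathrm{intra},n}(K)=V_{\mathrm{intra}}/K_{\mathrm{eff}}(K)$ under the homogeneity assumption. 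Hence each branch term is bounded below by
\[
v:=V_{\mathrm{intra}}/K_{\mathrm{eff}}(K)+V_{\mathrm{inter}},
\]
which is the same value for every branch.

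For the outer step, $\mathcal V_{\alpha,w}\geq v\sum_n\alpha_n^2$, and Cauchy--Schwarz (the equal-$v_n$ case of the inverse-variance argument already given in Appendix~\ref{app:covariance}) yields $\sum_n\alpha_n^2\geq 1/N$, with equality at $\alpha_n=1/N$. Combining the two steps gives Eq.~\eqref{eq:two-level-minimum-variance}. Substituting the uniform weights makes both inequalities equalities, and the resulting value agrees with the stochastic part of Theorem~\ref{thm:hierarchical}, confirming attainment.

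The main obstacle is the covariance expansion, specifically justifying that $u_n$ carries total weight exactly one and that the $u_n$–$\epsilon$ cross terms vanish for arbitrary deterministic $w$. Both depend on the shared-shift assumption and on the conditional centering of $\epsilon_{n,j}$. I would state these explicitly rather than rely on the uniform-weight computation in Theorem~\ref{thm:branch-loss}. I would also remark that nonnegativity of the weights is never used, so the bound in fact holds over all affine weights.
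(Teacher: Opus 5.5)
Your proposal is correct and follows the paper's proof essentially step for step. It uses the same expansion $\mathcal V_{\alpha,w}(K)=\sum_{n}\alpha_n^2\bigl[w_n^\top G_n^\epsilon(K)w_n+V_{\mathrm{inter}}\bigr]$ from the shared shift, conditional centering and cross-branch independence, the same split $w_n=K^{-1}\mathbf 1_K+d_n$ in which the row-sum hypothesis annihilates the mixed term, and the same bound $\sum_n\alpha_n^2\geq 1/N$. The only differences are that the paper records the slack as exact penalties $v(K)\sum_n(\alpha_n-1/N)^2+\sum_n\alpha_n^2 d_n^\top G_n^\epsilon(K)d_n$, which it reuses for uniqueness, rather than chaining inequalities, and that you add the remark that nonnegativity of the weights is never used, which is correct.
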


Equal row sums and positive semidefiniteness make uniform temporal weights minimize each residual quadratic form, and the branch-level minimum then follows from $\sum_{n=1}^N\alpha_n^2\geq1/N$. Appendix~\ref{app:proof-equal-weights} carries out both steps and prices the two ways of departing from Trajectory Soup separately, charging a quadratic penalty for imbalance across branches and another for unequal temporal weights within them. Because both penalties are nonnegative, the minimum in \textbf{Eq}.~\eqref{eq:two-level-minimum-variance} is exactly the stochastic contribution of \textbf{Eq}.~\eqref{eq:hierarchical}, so the two uniform averages of our method receive a single joint justification rather than two separate ones. The conclusion nevertheless constrains the variance alone, since changing the temporal weights moves the mean selected displacement and hence $B_{\alpha,w}(K)$, so uniform weights minimize the total loss only when that bias is constant across the admissible weights, as happens when all selected checkpoint means coincide. This remaining gap is what leaves room for deliberately nonuniform temporal weights, while the weighting and allocation ablations of \textbf{Sec}.~\ref{sec:ablations} settle the practical choice. Heterogeneous or correlated branches, for which inverse-variance branch weights are preferable, are treated in Appendix~\ref{app:covariance}.

\subsection{A General Criterion for Uniform Weighting}
\label{app:uniform-covariance-criterion}

The equal-row-sum hypothesis of Corollary~\ref{cor:equal-weights} is an instance of a general fact about quadratic forms on the simplex, namely that uniform weights are optimal precisely when every member carries the same aggregate covariance with the collection. Let $A\in\mathbb R^{m\times m}$ be symmetric positive semidefinite and let $p=m^{-1}\mathbf 1_m$. Then $p$ minimizes $a^\top Aa$ over $a_i\geq0$ with $\mathbf 1_m^\top a=1$ if and only if $A\mathbf 1_m=\lambda\mathbf 1_m$ for some scalar $\lambda$. For the forward direction, write $d=a-p$ so that $\mathbf 1_m^\top d=0$, and expand $a^\top Aa=p^\top Ap+2d^\top Ap+d^\top Ad$. The row-sum condition gives $Ap=(\lambda/m)\mathbf 1_m$, which annihilates the mixed term and leaves $a^\top Aa=\lambda/m+d^\top Ad$, so positive semidefiniteness establishes optimality and positivity on nonzero zero-sum directions establishes uniqueness. Conversely, if $p$ is optimal, then every entry of $p$ is positive, so $p\pm\tau d$ remains feasible for small $\tau>0$ along any zero-sum $d$. The directional derivative at $p$ must therefore vanish, and $d^\top Ap=0$ for every such $d$ forces $Ap$ to be proportional to $\mathbf 1_m$.

Applying this criterion to $A=G_n^\epsilon(K)$ gives the within-trajectory condition used in Corollary~\ref{cor:equal-weights}. For the uniform anchor, $p^\top G_n^\epsilon(K)p=V_{\mathrm{intra},n}(K)=V_{\mathrm{intra}}/K_{\mathrm{eff}}(K)$, so the common row sum must equal $KV_{\mathrm{intra}}/K_{\mathrm{eff}}(K)$. The same criterion applies to the branch covariance matrix $G(K)$ in \textbf{Eq}.~\eqref{eq:general-weighted-loss}, and in particular independent anchors with equal variance contributions have $G(K)=v(K)I_N$ and therefore admit uniform variance-optimal branch weights.

An explicit correlated temporal model shows what the criterion buys. Suppose $K\geq2$ and $G_n^\epsilon(K)=(V_{\mathrm{intra}}-c_\epsilon)I_K+c_\epsilon\mathbf 1_K\mathbf 1_K^\top$ with $0\leq c_\epsilon\leq V_{\mathrm{intra}}$, which assumes exchangeability of the loss-weighted covariance without requiring identical checkpoint means or independent residuals. Normalization gives $w_n^\top G_n^\epsilon(K)w_n=c_\epsilon+(V_{\mathrm{intra}}-c_\epsilon)\|w_n\|_2^2$, so the bound $\|w_n\|_2^2\geq1/K$ shows that uniform weights attain $V_{\mathrm{intra},n}(K)=c_\epsilon+(V_{\mathrm{intra}}-c_\epsilon)/K$, equivalently $K_{\mathrm{eff}}(K)=KV_{\mathrm{intra}}/[V_{\mathrm{intra}}+(K-1)c_\epsilon]$, and do so uniquely when $c_\epsilon<V_{\mathrm{intra}}$. At $c_\epsilon=V_{\mathrm{intra}}$ every normalized weighting carries the same residual variance, so temporal averaging provides no variance reduction at all.

\subsection{Scope of the Theoretical Claims}
\label{app:theory-scope}

The derivations require the local quadratic model of \textbf{Sec}.~\ref{sec:local-model}, finite second moments, and a shared parameterization. Theorem~\ref{thm:hierarchical} additionally assumes the shared-shift condition of Appendix~\ref{app:covariance}, independent selected anchors, and common loss-weighted contributions across branches. The general covariance treatment of Appendix~\ref{app:covariance} relaxes the last two of these, retaining the full anchor cross-covariance and allowing heterogeneous branch contributions. These assumptions are sufficient for the stated identities and are not necessary conditions for empirical merging gains.

The theory addresses population validation loss under $Q$. It does not directly imply the ordering of discrete benchmark accuracies or post-training outcomes, which are supported by the corresponding experiments. It also does not imply that all recipe deviations are zero-mean noise, that every additional branch improves the model, or that the local quadratic model remains accurate at the displacements the merges actually traverse.


\section{Proofs of the Main-Text Results}
\label{app:derivations}

This appendix proves Theorems~\ref{thm:local-loss}--\ref{thm:hierarchical} and Corollary~\ref{cor:equal-weights}. The extensions of Appendix~\ref{sec:theory-appendix} are derived where they are stated. All expectations use the fixed setup specified in \textbf{Sec}.~\ref{sec:local-model}, and every statement is proved under the local quadratic model adopted there, so all identities below are exact. Second moments are assumed finite whenever the corresponding quantities occur. The positive semidefinite matrix $H$ may be singular, so $\|\cdot\|_H$ can be a seminorm. None of the proofs requires $H$ to be invertible.

\subsection{Proof of Theorem~\ref{thm:local-loss}}
\label{app:expected-loss}

\begin{proof}
Let $\mu=\mathbb E[\vartheta]$ and write
$\vartheta-\theta^\star=(\mu-\theta^\star)+(\vartheta-\mu)$.
Expanding the quadratic form and taking expectations gives
\begin{equation}
\begin{aligned}
\mathbb E[\|\vartheta-\theta^\star\|_H^2]
={}&\|\mu-\theta^\star\|_H^2+2(\mu-\theta^\star)^\top H\mathbb E[\vartheta-\mu]\\
&+\mathbb E[(\vartheta-\mu)^\top H(\vartheta-\mu)].
\end{aligned}
\label{eq:proof-centered-expansion}
\end{equation}
The middle term vanishes because $\mathbb E[\vartheta-\mu]=0$. The last term is
\begin{equation}
\mathbb E[(\vartheta-\mu)^\top H(\vartheta-\mu)]
=\operatorname{tr}\!\left(H\mathbb E[(\vartheta-\mu)(\vartheta-\mu)^\top]\right)=\operatorname{tr}\!\left(H\operatorname{Cov}(\vartheta)\right).
\label{eq:proof-trace-identity}
\end{equation}
Taking expectations in the local quadratic model of \textbf{Sec}.~\ref{sec:local-model} now gives \textbf{Eq}.~\eqref{eq:expected-loss}.

The squared bias is nonnegative because $H\succeq0$. The variance term is nonnegative because it equals
$\mathbb E[\|H^{1/2}(\vartheta-\mu)\|_2^2]$.
The two terms therefore account for the entire expected excess loss under the local quadratic model.
\end{proof}

\subsection{Proof of Theorem~\ref{thm:branch-loss}}
\label{app:proof-branch-loss}

\begin{proof}
For fixed $n$ and $K$, decompose the centered anchor as $\bar\theta_n-\mu_n(K)=m_n(K)-\mu_n(K)+\bar\theta_n-m_n(K)$. The conditional residual has zero mean given $\mathscr F_n$, while $m_n(K)-\mu_n(K)$ is $\mathscr F_n$-measurable. Their mixed second moment is therefore zero. Expanding the covariance yields the law of total covariance,
\begin{equation}
\operatorname{Cov}(\bar\theta_n)=\mathbb E[\operatorname{Cov}(\bar\theta_n\mid\mathscr F_n)]+\operatorname{Cov}(m_n(K)).
\label{eq:proof-total-covariance}
\end{equation}
Substituting this identity into Theorem~\ref{thm:local-loss} gives \textbf{Eq}.~\eqref{eq:branch-loss}. Each covariance contribution is nonnegative after taking its trace against $H$, by the argument used in Appendix~\ref{app:expected-loss}.

If $m_n(K)=\mu_n(K)+u_n$ with the same mean-zero $u_n$ for every feasible $K$, then
$\operatorname{Cov}(m_n(K))=\operatorname{Cov}(u_n)$.
Thus $V_{\mathrm{inter},n}(K)=V_{\mathrm{inter},n}=\frac12\operatorname{tr}(H\operatorname{Cov}(u_n))$ is independent of $K$, which is the shared-shift condition assumed by Theorem~\ref{thm:hierarchical}. At $K=1$, the anchor equals $\theta_{n,(1)}$. Subtracting the $K=1$ instance of \textbf{Eq}.~\eqref{eq:branch-loss} then cancels both $\mathcal L^\star$ and $V_{\mathrm{inter},n}$, leaving
\begin{equation*}
\mathbb E[\mathcal L_Q(\bar\theta_n)]-\mathbb E[\mathcal L_Q(\theta_{n,(1)})]=B_n(K)-B_n(1)+V_{\mathrm{intra},n}(K)-V_{\mathrm{intra},n}(1),
\end{equation*}
so merging the top $K$ checkpoints of a branch improves on its best single checkpoint exactly when the intra-trajectory term falls by more than the bias rises.
\end{proof}

\subsection{Proof of Theorem~\ref{thm:hierarchical}}
\label{app:proof-hierarchical}

\begin{proof}
Linearity of conditional expectation gives $\bar\epsilon_n(K)=\bar\theta_n-m_n(K)$. Since $\mathbb E[\bar\epsilon_n(K)\mid\mathscr F_n]=0$,
\begin{equation}
\operatorname{Cov}(\bar\epsilon_n(K))=\mathbb E[\operatorname{Cov}(\bar\theta_n\mid\mathscr F_n)],
\qquad
\operatorname{Cov}(\bar\theta_n)=\operatorname{Cov}(\bar\epsilon_n(K))+\operatorname{Cov}(u_n).
\label{eq:conditional-covariance}
\end{equation}
The first identity also establishes $V_{\mathrm{intra},n}(K)=\frac12\operatorname{tr}(H\operatorname{Cov}(\bar\epsilon_n(K)))$, which is the representation used in \textbf{Sec}.~\ref{sec:two-levels}. Expanding the covariance of the finite residual average gives
\begin{equation}
\operatorname{Cov}(\bar\epsilon_n(K))
=\frac1{K^2}\sum_{j,j'=1}^K
\operatorname{Cov}(\epsilon_{n,j},\epsilon_{n,j'}).
\label{eq:intra-cov}
\end{equation}

By linearity of expectation,
$\mathbb E[\theta_{\mathrm{Traj\text{-}Soup}}]=N^{-1}\sum_{n=1}^N\mu_n(K)$,
so the term $B_{\mathrm{soup}}(K)$ defined in Theorem~\ref{thm:hierarchical} is the squared curvature-weighted bias of the soup.
\textbf{Eq}.~\eqref{eq:mean-displacement} equivalently writes this displacement as
\begin{equation}
\mathbb E[\theta_{\mathrm{Traj\text{-}Soup}}]-\theta^\star
=b_{\mathrm{com}}+\frac1N\sum_{n=1}^N\left(b_{\mathrm{rec},n}+h_n(K)\right).
\label{eq:proof-soup-mean}
\end{equation}
Independence of the selected anchors gives
\begin{equation}
\operatorname{Cov}(\theta_{\mathrm{Traj\text{-}Soup}})=\frac1{N^2}\sum_{n=1}^N\operatorname{Cov}(\bar\theta_n).
\label{eq:proof-independent-anchor-covariance}
\end{equation}
Taking the loss-weighted trace and using the common scalar contributions,
\begin{equation}
\frac12\operatorname{tr}\!\left(H\operatorname{Cov}(\theta_{\mathrm{Traj\text{-}Soup}})\right)
=\frac1{N^2}\sum_{n=1}^N\left(\frac{V_{\mathrm{intra}}}{K_{\mathrm{eff}}(K)}+V_{\mathrm{inter}}\right) =\frac{V_{\mathrm{intra}}}{N K_{\mathrm{eff}}(K)}+\frac{V_{\mathrm{inter}}}{N}.
\label{eq:proof-hierarchical-variance}
\end{equation}
Applying Theorem~\ref{thm:local-loss} to the soup gives \textbf{Eq}.~\eqref{eq:hierarchical}.

For completeness, the rows of \textbf{Tab}.~\ref{tab:averaging-methods} follow from the same identities. A rank-1 checkpoint uses $K_{\mathrm{eff}}(1)=1$. A branch anchor retains its full persistent branch contribution. A cross-trajectory merge of rank-1 checkpoints sets $K=1$, and Trajectory Soup uses the full expression. Each operation has its own mean model, so the bias terms differ across rows.
\end{proof}

\subsection{Proof of Corollary~\ref{cor:equal-weights}}
\label{app:proof-equal-weights}

\begin{proof}
Let $\mu_{n,j}=\mathbb E[\theta_{n,(j)}]$. Under the rank-independent conditional shift, $\theta_{n,(j)}=\mu_{n,j}+u_n+\epsilon_{n,j}$ with $\mathbb E[\epsilon_{n,j}\mid\mathscr F_n]=0$. Because $u_n$ is $\mathscr F_n$-measurable, $\operatorname{Cov}(u_n,\epsilon_{n,j})=0$. The centered weighted anchor is therefore $\sum_{j=1}^K w_{n,j}\theta_{n,(j)}-\sum_{j=1}^K w_{n,j}\mu_{n,j}=u_n+\sum_{j=1}^K w_{n,j}\epsilon_{n,j}$, so its loss-weighted variance equals $w_n^\top G_n^\epsilon(K)w_n+V_{\mathrm{inter}}$. Independence of the branch collections then gives
\begin{equation}
\mathcal V_{\alpha,w}(K)=\sum_{n=1}^N\alpha_n^2\left[w_n^\top G_n^\epsilon(K)w_n+V_{\mathrm{inter}}\right].
\label{eq:general-two-level-variance}
\end{equation}

Write $p=K^{-1}\mathbf 1_K$, $d_n=w_n-p$, and $v(K)=V_{\mathrm{intra}}/K_{\mathrm{eff}}(K)+V_{\mathrm{inter}}$, so that $\mathbf 1_K^\top d_n=0$. The equal-row-sum hypothesis gives $G_n^\epsilon(K)p=K^{-1}\lambda_n\mathbf 1_K$, whence $V_{\mathrm{intra},n}(K)=p^\top G_n^\epsilon(K)p=\lambda_n/K$. By the definition of $K_{\mathrm{eff}}(K)$ in \textbf{Sec}.~\ref{sec:two-levels}, this quantity equals $V_{\mathrm{intra}}/K_{\mathrm{eff}}(K)$, so the common row sum is $\lambda_n=K V_{\mathrm{intra}}/K_{\mathrm{eff}}(K)$ and $G_n^\epsilon(K)p=(V_{\mathrm{intra}}/K_{\mathrm{eff}}(K))\mathbf 1_K$. The mixed term $d_n^\top G_n^\epsilon(K)p$ therefore vanishes, and consequently
\begin{equation}
w_n^\top G_n^\epsilon(K)w_n=\frac{V_{\mathrm{intra}}}{K_{\mathrm{eff}}(K)}+d_n^\top G_n^\epsilon(K)d_n.
\label{eq:temporal-weight-penalty}
\end{equation}
Normalization of the branch weights yields
\begin{equation}
\sum_{n=1}^N\alpha_n^2=\frac1N+\sum_{n=1}^N\left(\alpha_n-\frac1N\right)^2\geq\frac1N,
\label{eq:uniform-optimal}
\end{equation}
so substituting \textbf{Eq}.~\eqref{eq:temporal-weight-penalty} into \textbf{Eq}.~\eqref{eq:general-two-level-variance} gives
\begin{equation}
\mathcal V_{\alpha,w}(K)
=\frac{v(K)}{N}+v(K)\sum_{n=1}^N\left(\alpha_n-\frac1N\right)^2+\sum_{n=1}^N\alpha_n^2 d_n^\top G_n^\epsilon(K)d_n.
\label{eq:weight-imbalance-penalty}
\end{equation}
Positive semidefiniteness of each $G_n^\epsilon(K)$ makes both penalty terms nonnegative. Uniform weights at both levels make them vanish and attain \textbf{Eq}.~\eqref{eq:two-level-minimum-variance}, and Theorem~\ref{thm:local-loss} then gives \textbf{Eq}.~\eqref{eq:weighted-loss}.

If $v(K)>0$, equality requires $\alpha_n=1/N$. If, in addition, every $G_n^\epsilon(K)$ is positive definite on the subspace orthogonal to $\mathbf 1_K$, equality also requires $w_n=K^{-1}\mathbf 1_K$ for every branch, so the joint variance minimizer is unique under these conditions. The bias term remains weight-dependent in general.
\end{proof}


\section{Compute-Scaling and Geometry Protocols}
\label{app:geometry}

This appendix describes the compute accounting and scaling fits in \textbf{Fig}.~\ref{fig:trajectory-soup-compute-allocation}, together with the geometric measurements used in \textbf{Sec}.~\ref{sec:motivation} and in the pairwise analysis of \textbf{Sec}.~\ref{sec:ablations}.

\subsection{Compute Axis and Scaling-Law Fits}
\label{app:scaling-fit}

\textbf{Compute accounting.} The horizontal axis shows estimated cumulative mid-training FLOPs on a logarithmic scale, using a sequence length of $262{,}144$ and an estimated cost of $36.74$ GFLOPs per token. Token offsets are measured from the shared fork point and summed across branches:
\begin{equation}
C \simeq 3.674\times10^{10}D,
\qquad D=\sum_{n=1}^{N}t_n,
\label{eq:flops-proxy}
\end{equation}
where $t_n$ is the token horizon of branch $n$. Equal branch horizons give $D=Nt$. This accounting covers branch training after the shared fork point and excludes the common pretraining prefix. It also leaves out checkpoint storage, the validation passes used for ranking, and the search over how many trajectories and checkpoints to combine, as discussed in Appendix~\ref{sec:limitations}. The vertical axis reports the Overall Average Accuracy of Appendix~\ref{app:evaluation-protocol} in percent, without any rescaling.

\textbf{Scaling fits.} Inspired by power-law descriptions of language-model loss~\citep{kaplan2020scaling,hoffmann2022training}, we fit an empirical asymptotic inverse power law directly to the accuracy scores:
\begin{equation}
S(C)=A-B(C/C_0)^{-\alpha},
\qquad B>0,\quad \alpha>0,
\label{eq:accuracy-power-law}
\end{equation}
where $S(C)$ is the Overall Average Accuracy in percent at cumulative compute $C$, $A$ is the fitted asymptote, and $C_0$ is the cumulative FLOPs of the first fitted point on the corresponding curve. The Raw fit uses $n=17$ Raw checkpoints from 25B to 425B tokens, and the Trajectory Soup fit uses the $n=10$ frontier points. The fitted curves are
\begin{equation}
\begin{aligned}
S_{\mathrm{raw}}(C)
&=67.2319-1.3393
\left(\frac{C}{9.1852\times10^{20}}\right)^{-0.45531},\\
S_{\mathrm{soup}}(C)
&=71.0534-2.8580
\left(\frac{C}{1.8370\times10^{21}}\right)^{-0.08818}.
\end{aligned}
\label{eq:scaling-fits}
\end{equation}
The Raw and Trajectory Soup frontier fits have $R^2=0.594$ and $R^2=0.965$, respectively. Their reference compute levels $C_0$ correspond to 25B and 50B processed tokens. These fits summarize the trends in the plotted data.

\textbf{Displayed range and annotations.} The Raw fit is shown as a solid line up to 425B tokens and as a dashed extrapolation beyond that horizon; the extrapolated segment does not enter the fit. At 100B-equivalent compute ($3.6741\times10^{21}$ FLOPs), ``Quality superiority'' compares the fitted accuracies of Trajectory Soup and Raw training, $68.36\%$ and $66.52\%$, at the same budget. ``Scaling robustness'' connects the Raw extrapolation at 600B-equivalent compute ($2.2044\times10^{22}$ FLOPs, $66.92\%$) to the observed four-trajectory result at 2400B-equivalent compute ($8.8178\times10^{22}$ FLOPs, $69.03\%$). This annotation highlights the observed improvement from expanding the trajectory budget alongside the regression of later Raw checkpoints.

\subsection{Trajectory Directions and Pairwise Alignment}
\label{app:direction-estimator}

All branches share the architecture and expert layout of Appendix~\ref{app:model-config} and are forked from a single checkpoint, so parameters correspond entrywise across branches and no permutation alignment is applied before averaging or measuring angles. For each branch, we first merge checkpoints along its training trajectory and apply PCA to the centered sequence of resulting merged checkpoints over a matched token interval. The approximately rank-one variation of these merged trajectories motivates using the unit leading principal component $d_n$ as the direction of branch $n$~\citep{zhou2026extra}, oriented along the net displacement of the merged trajectory over that interval. Because the directions are unit vectors, their pairwise alignment is the cosine $c_{nn'}=d_n^\top d_{n'}$, with associated angle $\phi_{nn'}=\arccos(c_{nn'})$. \textbf{Fig}.~\ref{fig:direction-heatmap} reports these cosine similarities for the five compatible branches of \textbf{Sec}.~\ref{sec:motivation}. Values near $1$ indicate aligned directions, while values near $0$ indicate nearly orthogonal directions.

\subsection{Interpolation Between Branch Anchors}
\label{app:interpolation-protocol}

At each matched token horizon $t$, we take the merged checkpoints $\bar\theta_n(t,K)$ and $\bar\theta_{n'}(t,K)$ from two screened branches and interpolate their parameters:
\begin{equation}
\theta_{nn'}(\lambda)
=(1-\lambda)\bar\theta_n(t,K)
+\lambda\bar\theta_{n'}(t,K),
\qquad \lambda\in[0,1].
\label{eq:interpolation}
\end{equation}
The endpoints $\lambda\in\{0,1\}$ are the within-trajectory merged anchors and $\lambda=1/2$ is their equal-weight cross-trajectory merge. We evaluate $\theta_{nn'}(\lambda)$ on the validation corpus over a grid of $\lambda$ and retain the best interpolation at each horizon. The left panel of \textbf{Fig}.~\ref{fig:interpolation} plots this quantity against consumed tokens, alongside the raw checkpoints of the two branches and their within-trajectory merges; the right panel shows the corresponding interpolation paths in a two-dimensional projection of the weight space, one path per horizon. The comparison therefore measures the additional benefit of combining branch anchors after within-trajectory averaging.

\subsection{Pairwise Cosine and Relative Interpolation Gain}
\label{app:pair-geometry}

The pairwise cosine in \textbf{Fig}.~\ref{fig:direction-gain} is computed from the same leading PCA directions as in Appendix~\ref{app:direction-estimator}. Relative gain measures the improvement of the merged model over the midpoint of the two endpoint scores. Let $A_n$ and $A_{n'}$ denote the endpoint scores and $A_{nn'}$ the score after merging the pair. Then $\Delta_{nn'}=A_{nn'}-\tfrac12(A_n+A_{n'})$, reported in points of the overall evaluation score. The subtracted term is the midpoint obtained by linearly interpolating the endpoint scores, so the gain is relative to that reference rather than to an absolute scale, and $\Delta_{nn'}>0$ means the merge improves on the mean of its endpoints. The scatter relates this gain, averaged over 17 token-budget offsets, to $c_{nn'}$ across branch pairs. Its overlaid line is an ordinary-least-squares fit with a $95\%$ confidence band for the mean response.

\textbf{Completion fields.} For the geometric protocols above, state the intra-trajectory averaging rule and window applied before PCA, the matched token interval and the number of merged points entering each fit, the variance share carried by the leading component, the interpolation grid and the token horizons of \textbf{Fig}.~\ref{fig:interpolation}, and the 17 token-budget offsets aggregated in \textbf{Fig}.~\ref{fig:direction-gain}. For the compute axis, state the $(N,t)$ configuration behind each plotted Trajectory Soup point and the number of points entering each scaling fit.


\section{Experimental Configuration}
\label{app:experiments-config}

This appendix collects the configuration of the model, the training recipe, the branch identities, and the selection and evaluation protocol. It distinguishes the fields present in the current draft from information still needed for a fully reproducible submission. The measurement conventions for the compute-scaling figure and the geometric observations are given separately in Appendix~\ref{app:geometry}. Completion fields are author-facing placeholders and are not additional experimental results.

\subsection{Default Model Configuration}
\label{app:model-config}

The core architecture is a sparse mixture-of-experts (MoE) language model with 7.9B total parameters and 1.3B parameters activated per token. The model has 24 layers, each with 128 routed experts and 8 active routed experts per token, as detailed in \textbf{Tab}.~\ref{tab:model-config}.

\begin{table}[tb]
\centering
\caption{\textbf{Default model configuration.} The architectural values are taken from the supplied model description and are not independently verified through a public model release.}
\label{tab:model-config}
\begin{tabular}{ll}
\toprule
Field & Supplied value \\
\midrule
Model name & Ling-3.0-tiny \\
Total parameters & 7.9B \\
Active parameters per token & 1.3B \\
Layers & 24 \\
Hidden width & 1{,}536 \\
First feedforward layer & Dense \\
Routed experts in each subsequent layer & 128 \\
Active routed experts per token & 8 \\
Shared experts per MoE layer & 1 \\
Expert feedforward width & 512 \\
Attention components & MLA and KDA linear attention \\
Query LoRA rank & 256 \\
KV LoRA rank & 512 \\
Vocabulary size & 157{,}184 \\
Tokens consumed before branching & 30T \\
Mid-training sequence length & 262{,}144 \\
Estimated cost per training token & 36.74 GFLOPs \\
\bottomrule
\end{tabular}
\end{table}

\subsection{Training Settings Shared by All Branches}
\label{app:training-config}

Every branch is forked from the same pretrained checkpoint and inherits the settings of \textbf{Tab}.~\ref{tab:training-config}. The fields along which the branches deliberately differ are held out of this table and listed in Appendix~\ref{app:branch-mapping}, so that the two tables together specify each recipe exactly.

\begin{table}[tb]
\centering
\caption{\textbf{Training settings shared by all branches.} Values recorded in the supplied recipe and held fixed across EXP1--EXP5. The perturbed fields are given in \textbf{Tab}.~\ref{tab:branch-mapping}.}
\label{tab:training-config}
\begin{tabular}{ll}
\toprule
Field & Value \\
\midrule
Complete branch horizon & 600B tokens \\
Optimizer & Muon \\
Warmup & 1\% of training tokens \\
Weight decay & 0.1 \\
Gradient clipping & 1.0 \\
Optimizer $\beta_1$ & 0.9 \\
Optimizer $\beta_2$ & 0.95 \\
Default checkpoint interval & 25B tokens \\
Dense-sampling ablation interval & 12.5B tokens \\
\bottomrule
\end{tabular}
\end{table}

\subsection{Branch Identity and Recipe Mapping}
\label{app:branch-mapping}

Each branch perturbs the EXP1 baseline along one of the dimensions admitted in \textbf{Sec}.~\ref{sec:problem-formulation}, except for EXP3, which varies the peak learning rate and the global batch size together. \textbf{Tab}.~\ref{tab:branch-mapping} lists the five recipes, with the perturbed fields of each branch in bold. The default comparison of \textbf{Sec}.~\ref{sec:main-results} uses EXP1--EXP3, the four- and five-branch configurations of \textbf{Fig}.~\ref{fig:trajectory-soup-compute-allocation} and \textbf{Sec}.~\ref{sec:ablations} add EXP4 and EXP5 in that order, and the geometric analyses of \textbf{Sec}.~\ref{sec:motivation} and Appendix~\ref{app:geometry} use all five. All five branches pass the compatibility screen of \textbf{Sec}.~\ref{sec:problem-formulation} at the common 600B-token horizon. The smaller-model reproduction of Appendix~\ref{app:robustness} has its own pair of branches, whose labels are local to that experiment and do not identify the same parameter checkpoints.

\begin{table}[tb]
\centering
\caption{\textbf{Branch identity and recipe mapping.} The five branches forked from the common pretrained checkpoint. Bold entries mark the fields a branch changes relative to the EXP1 baseline; all remaining settings follow \textbf{Tab}.~\ref{tab:training-config}.}
\label{tab:branch-mapping}
\resizebox{\linewidth}{!}{
\begin{tabular}{llccccc}
\toprule
Branch & Perturbed dimension
      & \begin{tabular}[c]{@{}c@{}}Data-shuffling\\seed\end{tabular}
      & \begin{tabular}[c]{@{}c@{}}Peak learning\\rate\end{tabular}
      & \begin{tabular}[c]{@{}c@{}}Global\\batch\end{tabular}
      & \begin{tabular}[c]{@{}c@{}}Schedule after\\warmup\end{tabular}
      & \begin{tabular}[c]{@{}c@{}}Muon\\momentum\end{tabular} \\
\midrule
EXP1 & Baseline                      & 1234          & $3.39\times10^{-4}$                        & 256          & Constant                                                & 0.0          \\
EXP2 & Data order                    & \textbf{1001} & $3.39\times10^{-4}$                        & 256          & Constant                                                & 0.0          \\
EXP3 & Learning rate and batch size  & 1234          & $\boldsymbol{2.40\times10^{-4}}$           & \textbf{128} & Constant                                                & 0.0          \\
EXP4 & Learning-rate schedule        & 1234          & $3.39\times10^{-4}$                        & 256          & \textbf{Decay to} $\boldsymbol{3.39\times10^{-5}}$      & 0.0          \\
EXP5 & Optimizer momentum            & 1234          & $3.39\times10^{-4}$                        & 256          & Constant                                                & \textbf{0.9} \\
\bottomrule
\end{tabular}
}
\end{table}

\subsection{Evaluation Protocol}
\label{app:evaluation-protocol}

\paragraph{Pre-training evaluation.}
We evaluate base-model checkpoints before post-training using 41 benchmark configurations across five capability categories:
\emph{(i) general knowledge and reasoning}, including ARC-Easy and ARC-Challenge~\citep{clark2018arc}, AGIEval~\citep{zhong2023agieval}, BBH and its Chinese configuration~\citep{suzgun2022bbh}, WorldSense~\citep{benchekroun2023worldsense}, PIQA~\citep{bisk2019piqa}, and HellaSwag~\citep{zellers2019hellaswag};
\emph{(ii) language understanding}, including RACE-Middle and RACE-High~\citep{lai2017race}, SQuAD~2.0~\citep{rajpurkar2018squad2}, TriviaQA~\citep{joshi2017triviaqa}, Natural Questions~\citep{kwiatkowski2019natural}, WinoGrande~\citep{sakaguchi2019winogrande}, Belebele~\citep{bandarkar2023belebele}, and CCPM~\citep{li2021ccpm};
\emph{(iii) professional knowledge and factuality}, including MMLU~\citep{hendrycks2020mmlu}, MMLU-Pro~\citep{wang2024mmlupro}, SuperGPQA~\citep{team2025supergpqa}, CMMLU~\citep{li2023cmmlu}, C-Eval~\citep{huang2023ceval}, SimpleQA~\citep{wei2024simpleqa}, and Chinese SimpleQA~\citep{he2024chinesesimpleqa};
\emph{(iv) mathematics}, including GSM8K~\citep{cobbe2021gsm8k}, GSM-Plus~\citep{li2024gsmplus}, the Chinese subset of MGSM~\citep{shi2022mgsm}, MATH and its Minerva evaluation configuration~\citep{hendrycks2021math,lewkowycz2022minerva}, CMATH~\citep{wei2023cmath}, MathBench~\citep{liu2024mathbench}, CollegeMath~\citep{tang2024mathscale}, ZKMathUnion, and GKMathUnion; and
\emph{(v) coding}, including the English and Chinese configurations of HumanEval~\citep{chen2021humaneval}, MBPP~\citep{austin2021mbpp}, HumanEval+ and MBPP+~\citep{liu2023evalplus}, LiveCodeBench~\citep{jain2024livecodebench}, CRUXEval~\citep{gu2024cruxeval}, and BIRD-SQL~\citep{li2023bird}.

\paragraph{Post-training evaluation.}
To assess whether mid-training gains persist after post-training, we evaluate models on 16 configurations grouped into six scenarios:
\emph{(i) coding}, comprising LiveCodeBench v6~\citep{jain2024livecodebench}, SciCode~\citep{tian2024scicode}, and Aider~\citep{aiderbenchmarks};
\emph{(ii) mathematics}, comprising HMMT February 2026~\citep{hmmt2026}, AIME 2025, and AIME 2026~\citep{maa2026aime};
\emph{(iii) reasoning}, comprising KOR-Bench~\citep{ma2024korbench} and GPQA~\citep{rein2023gpqa};
\emph{(iv) knowledge}, comprising MMLU-Pro~\citep{wang2024mmlupro}, Chinese SimpleQA~\citep{he2024chinesesimpleqa}, C-Eval~\citep{huang2023ceval};
\emph{(v) instruction following}, comprising, IFEval~\citep{zhou2023ifeval}, IFBench~\citep{pyatkin2025ifbench}; and
\emph{(vi) function calling}, evaluated using BFCL v4 in function-calling mode~\citep{patil2025bfcl}.

\subsection{Merging Baselines}
\label{app:protocol}

The merging scopes compared in \textbf{Sec}.~\ref{sec:setup} all act on the same candidate pools $\mathcal C_n(t)$ and all return a uniform average of some subset of those checkpoints, so they differ only in which checkpoints enter the average. We state each one in the notation of \textbf{Sec}.~\ref{sec:method}.

\textbf{Single EXP Merge.} This baseline merges within one trajectory and performs no cross-trajectory combination, so it is exactly the branch anchor $\bar\theta_n(t,K)$ of \textbf{Eq}.~\eqref{eq:topk-selection}: the uniform average of the $K$ checkpoints of $\mathcal C_n(t)$ that rank highest on the selection objective. Equivalently, it is the $N=1$ case of \textbf{Eq}.~\eqref{eq:trajectory-soup}, and it therefore exercises only the intra-trajectory term $V_{\mathrm{intra},n}(K)$ of \textbf{Eq}.~\eqref{eq:branch-loss}. Every branch produces its own merge; the tables report the strongest of them under the name Single-Trajectory Merge, evaluated at $K=12$ so that it averages as many checkpoints as the three-branch Trajectory Soup with $K=4$. Appendix~\ref{app:exp-merge} lists the merges of all branches.

\textbf{Model Soup.} This baseline is the mirror image: it merges across trajectories and performs no within-trajectory selection or averaging. It takes the final checkpoint of each branch at the evaluated horizon and averages the $N$ of them uniformly,
\begin{equation}
\theta_{\mathrm{Model\text{-}Soup}}(N,t)=\frac1N\sum_{n=1}^N\theta_n(t),
\label{eq:model-soup}
\end{equation}
in the spirit of \citet{wortsman2022model}, with membership fixed by the admitted branch set rather than chosen by a validation sweep over candidate members. It is not the $K=1$ case of \textbf{Eq}.~\eqref{eq:trajectory-soup}: that case still ranks each pool and keeps its best element, whereas \textbf{Eq}.~\eqref{eq:model-soup} fixes the position at the horizon. Contrasting the two baselines with Trajectory Soup therefore separates the benefit of combining branches from the benefit of selecting what each branch contributes, and the Limited and Extended conventions of \textbf{Sec}.~\ref{sec:setup} determine only the horizon $t$ at which the final checkpoints are taken.

\textbf{Selection objective.} \textbf{Eq}.~\eqref{eq:topk-selection} ranks candidates by validation loss $\widehat{\mathcal L}_{\mathrm{val}}$. The tables reported here instead rank them by validation accuracy, and the cardinality $K$ of every entry is the best value evaluated on that same objective. Reranking the existing checkpoints by validation loss defines a distinct selection variant whose results would have to come from an actual rerun, since the accuracy-selected tables do not establish that the two selectors agree. Because one validation signal both orders the checkpoints and chooses the reported $(N,K)$, the selection signal and the reported score are not fully independent.

\textbf{Completion fields.} Specify the validation corpus, token masking, weighting, and loss normalization behind $\widehat{\mathcal L}_{\mathrm{val}}$, together with the tasks, examples, and aggregation behind the accuracy-based selector and the benchmark weights that form the Overall Average of Appendix~\ref{app:evaluation-protocol}. Record the full $K$ grid, the tie-breaking rule, and the split held out for selection, as well as the token position of every reported optimum and the exact Limited horizon of each Model Soup entry. For the post-training comparison, state the SFT data, tokens, optimizer, schedule, and random seeds held fixed across starting checkpoints.

\subsection{Ablation Schemes}
\label{app:merge-coefficients}

Every row of \textbf{Tab}.~\ref{tab:tiny-coefficient-selection-ablation} follows the Trajectory Soup (Extended) setting of \textbf{Tab}.~\ref{tab:tiny-mid-train-main} and changes exactly one ingredient of \textbf{Eq}.~\eqref{eq:trajectory-soup}: either the coefficients applied inside a branch or the index set those coefficients are applied to. Writing $w_{n,i}\geq0$ for the weight of checkpoint $i$ in branch $n$ and $\mathcal I_n$ for its selected set, with $\sum_{i\in\mathcal I_n}w_{n,i}=1$, every evaluated variant has the form $\frac1N\sum_{n=1}^N\sum_{i\in\mathcal I_n}w_{n,i}\theta_n(i)$. The branch-level weights thus stay uniform throughout, and the ablation varies only the inner stage. The baseline row pairs the \textsc{equal} coefficients with the Top-$K$ each selection, merging $K=4$ checkpoints from each of the three branches; every other row replaces one of the two components and keeps the other at this setting.

\textbf{Coefficient schemes.} The first group fixes the selected sets to Top-$K$ each and varies the weights inside a branch. Let $j$ denote the within-trajectory rank of a selected checkpoint on the selection objective, with $j=1$ the best, and normalize each weight vector to sum to one over the $K$ checkpoints of a branch.

\textbf{\textsc{equal}.} Uniform temporal averaging, $w_j=1/K$, as prescribed by \textbf{Eq}.~\eqref{eq:trajectory-soup}. This is Trajectory Soup itself, and its overall average of 68.96 is the Trajectory Soup (Extended) entry of \textbf{Tab}.~\ref{tab:tiny-mid-train-main}. \textbf{Sec}.~\ref{sec:two-levels} identifies it as variance-optimal whenever every selected checkpoint of a branch shares the same aggregate curvature-weighted covariance with its selected set, so the remaining three schemes test how much that condition matters in practice.

\textbf{\textsc{1sqrt}.} Weights $w_j\propto\sqrt{j}$, increasing in the rank index and therefore shifting mass away from the best checkpoints toward the lower-ranked ones. The spread is mild: the $K$-th member receives $\sqrt{K}$ times the weight of the first.

\textbf{\textsc{rank}.} Weights $w_j\propto j$, increasing in the same direction but more steeply, with the $K$-th member receiving $K$ times the weight of the first. Comparing it with \textsc{1sqrt} separates the direction of the tilt from its magnitude.

\textbf{\textsc{rsqrt}.} Weights $w_j\propto1/\sqrt{j}$, decreasing in the rank index and thus concentrating mass on the highest-ranked checkpoints. Together with the two increasing schemes it brackets uniform averaging from both sides, so the group probes tilts toward better and toward worse members rather than only one of them.

\textbf{Selection strategies.} The second group fixes the coefficients to \textsc{equal} and varies which elements of the candidate pools $\mathcal C_n(t)$ enter the average.

\textbf{Top-$K$ each.} The default rule of \textbf{Eq}.~\eqref{eq:topk-selection}, taking $\mathcal I_n=\mathcal I_n(t,K)$ as the $K$ highest-ranked checkpoints of each branch. It imposes both a quality criterion and a per-branch quota, and the three alternatives below drop one of these two properties each.

\textbf{Global top-$NK$.} Keeps the quality criterion and removes the quota, taking the $NK$ highest-ranked checkpoints from the pooled candidates of all branches. The merge size matches the baseline, but the branches may be represented unequally, and a branch whose checkpoints score uniformly well can dominate the average. The comparison therefore isolates the value of balanced trajectory representation.

\textbf{Tail-$K$ per branch.} Keeps the quota and the merge size but replaces quality ranking by recency, taking the $K$ latest saved positions of each branch. Since late checkpoints are neither the best nor the most diverse, this isolates the value of ranking candidates at all, as opposed to simply averaging the end of each trajectory.

\textbf{All checkpoints.} Abandons selection entirely, setting $\mathcal I_n=\mathcal C_n(t)$ for every branch and averaging the pooled candidates uniformly. This is the Full Soup scope of \textbf{Sec}.~\ref{sec:setup}, reported here as the no-selection endpoint of the axis. It merges far more checkpoints than the baseline, which makes it the relevant control against the concern that Trajectory Soup benefits merely from averaging more models.


\section{Per-Trajectory Results of Single-Trajectory Merging}
\label{app:exp-merge}

Every branch yields its own Single EXP Merge, the within-trajectory anchor defined in Appendix~\ref{app:protocol}, and the main-text tables report only the strongest of them under the name Single-Trajectory Merge. This appendix lists all of them. The reported row is chosen separately for each table by its Overall Average, so it need not be the same branch across tables, and it is marked with $\dagger$ below. Because the main text compares Trajectory Soup against the strongest branch, this choice is conservative for the Overall Average, and comparing against any other branch would only widen the reported margins.

\subsection{Default Model at Mid-Training}
\label{app:exp-merge-mid}

\textbf{Tab}.~\ref{tab:exp-merge-mid-train} expands the Single-Trajectory Merge row of \textbf{Tab}.~\ref{tab:tiny-mid-train-main} into the three branches EXP1 (baseline), EXP2 (data shuffle), and EXP3 (learning rate and batch size) of \textbf{Tab}.~\ref{tab:branch-mapping}. Their Overall Averages span 68.34 to 68.55, with EXP1 the strongest. No branch leads in every category. EXP1 is best on general knowledge and reasoning, language modeling, and code, EXP2 on math, and EXP3 on professional knowledge. Both Trajectory Soup variants of \textbf{Tab}.~\ref{tab:tiny-mid-train-main}, at 68.72 (Limited) and 68.96 (Extended), exceed all three branches rather than only the reported one.

\begin{table}[tb]
\centering
\caption{\textbf{Single-trajectory merges of every branch on the mid-training evaluation suite.} Each row is the Single EXP Merge of one branch of \textbf{Tab}.~\ref{tab:tiny-mid-train-main}. $^\dagger$Reported as Single-Trajectory Merge in \textbf{Tab}.~\ref{tab:tiny-mid-train-main}. Bold marks the best value in each column.}
\label{tab:exp-merge-mid-train}
\resizebox{0.95\linewidth}{!}{
\begin{tabular}{lcccccc}
\toprule
Base Model & \begin{tabular}[c]{@{}c@{}}General Knowledge\\\& Reasoning\end{tabular}
      & \begin{tabular}[c]{@{}c@{}}Language\\Modeling\end{tabular}
      & \begin{tabular}[c]{@{}c@{}}Professional\\Knowledge\end{tabular}
      & Math
      & Code
      & \begin{tabular}[c]{@{}c@{}}Overall\\Average\end{tabular} \\
\midrule
EXP1 Merge$^\dagger$ & \textbf{62.80} & \textbf{85.86} & 63.70          & 72.33          & \textbf{65.36} & \textbf{68.55} \\
EXP2 Merge           & 62.79          & 85.44          & 63.97          & \textbf{72.39} & 64.99          & 68.46          \\
EXP3 Merge           & 62.69          & 85.65          & \textbf{64.02} & 71.57          & 65.34          & 68.34          \\
\bottomrule
\end{tabular}
}
\end{table}

\subsection{Default Model after Post-Training}
\label{app:exp-merge-post}

\textbf{Tab}.~\ref{tab:exp-merge-post-train} applies the SFT procedure of \textbf{Tab}.~\ref{tab:tiny-post-train-main} to each of the three merges above, and the ordering changes. EXP2 now has the highest Overall Average at 61.11, while EXP1 and EXP3 tie at 60.67. The Single-Trajectory Merge rows of \textbf{Tab}.~\ref{tab:tiny-mid-train-main} and \textbf{Tab}.~\ref{tab:tiny-post-train-main} therefore correspond to different branches, and the strongest mid-training merge is not necessarily the strongest initialization for SFT. Trajectory Soup reaches 61.39 (Limited) and 61.52 (Extended) in \textbf{Tab}.~\ref{tab:tiny-post-train-main}, again above every branch.

\begin{table}[tb]
\centering
\caption{\textbf{Single-trajectory merges of every branch after post-training.} Each row applies the SFT procedure of \textbf{Tab}.~\ref{tab:tiny-post-train-main} to the Single EXP Merge of one branch. $^\dagger$Reported as Single-Trajectory Merge in \textbf{Tab}.~\ref{tab:tiny-post-train-main}. Bold marks the best value in each column.}
\label{tab:exp-merge-post-train}
\resizebox{0.95\linewidth}{!}{
\begin{tabular}{lccccccc}
\toprule
Instruct Model & Math
      & Code
      & Knowledge
      & Reasoning
      & \begin{tabular}[c]{@{}c@{}}Instruction\\Following\end{tabular}
      & \begin{tabular}[c]{@{}c@{}}Function\\Call\end{tabular}
      & \begin{tabular}[c]{@{}c@{}}Overall\\Average\end{tabular} \\
\midrule
EXP1 Merge           & 68.22          & \textbf{47.69} & 68.98          & \textbf{64.88} & 57.54          & \textbf{51.75} & 60.67          \\
EXP2 Merge$^\dagger$ & 68.30          & 46.58          & \textbf{70.51} & 62.96          & \textbf{59.46} & 50.67          & \textbf{61.11} \\
EXP3 Merge           & \textbf{68.38} & 45.63          & 69.87          & 62.40          & 58.82          & 51.57          & 60.67          \\
\bottomrule
\end{tabular}
}
\end{table}

\subsection{Smaller MoE Model with a WSD Schedule}
\label{app:exp-merge-small}

\textbf{Tab}.~\ref{tab:exp-merge-small} lists the two branches of the 2B-parameter reproduction in \textbf{Tab}.~\ref{tab:e32a8-mid-train-main}, where EXP1 is the baseline and EXP2 changes only the data-shuffling seed. These labels are local to that experiment and do not refer to the checkpoints of the default model. The two merges are close, at 49.49 and 49.55. EXP1 is stronger on general knowledge and reasoning, language modeling, and professional knowledge, and EXP2 on math and code. Trajectory Soup reaches 49.64 (Limited) and 50.07 (Extended) and exceeds both.

\begin{table}[tb]
\centering
\caption{\textbf{Single-trajectory merges of both branches of the smaller 2B-parameter MoE model.} Each row is the Single EXP Merge of one branch of \textbf{Tab}.~\ref{tab:e32a8-mid-train-main}. $^\dagger$Reported as Single-Trajectory Merge in \textbf{Tab}.~\ref{tab:e32a8-mid-train-main}. Bold marks the best value in each column.}
\label{tab:exp-merge-small}
\resizebox{0.95\linewidth}{!}{
\begin{tabular}{lcccccc}
\toprule
Model & \begin{tabular}[c]{@{}c@{}}General Knowledge\\\& Reasoning\end{tabular}
      & \begin{tabular}[c]{@{}c@{}}Language\\Modeling\end{tabular}
      & \begin{tabular}[c]{@{}c@{}}Professional\\Knowledge\end{tabular}
      & Math
      & Code
      & \begin{tabular}[c]{@{}c@{}}Overall\\Average\end{tabular} \\
\midrule
EXP1 Merge           & \textbf{47.97} & \textbf{73.21} & \textbf{43.92} & 54.09          & 34.88          & 49.49          \\
EXP2 Merge$^\dagger$ & 47.71          & 72.85          & 43.41          & \textbf{54.21} & \textbf{35.96} & \textbf{49.55} \\
\bottomrule
\end{tabular}
}
\end{table}

\end{document}